\pgfplotsset{compat=1.14}
\title{Positivity sets of hinge functions}
\author{Josef Schicho, Ayush Kumar Tewari, Audie Warren}
\newcommand{\R}{\mathbb{R}}
\newcommand{\conv}{\text{Conv} \, }
\newtheorem{prop}{Proposition}
\newtheorem{lemma}{Lemma}
\newtheorem{theorem}{Theorem}
\theoremstyle{case}
\theoremstyle{remark}
\newtheorem{example}{Example}
\begin{document}
 \maketitle
 \begin{abstract}
     In this paper we investigate which subsets of the real plane are realisable as the set of points on which a one-layer ReLU neural network takes a positive value. In the case of cones we give a full characterisation of such sets. Furthermore, we give a necessary condition for any subset of $\mathbb R^d$. We give various examples of such one-layer neural networks.
 \end{abstract}

\section{Introduction}

In this paper, a \textit{hinge function} is a representation of a one-layer ReLU neural network which allows skip connections; we will define a hinge function as a continuous piecewise affine linear function given by
\begin{gather}\label{def:hingemin}
    h:\mathbb R^d \rightarrow \mathbb R \\
    h(\boldsymbol{x}) = L_0(\boldsymbol{x}) + \sum_{i=1}^s \min(L_i(\boldsymbol{x}),0) - \sum_{j=s+1}^t \min(L_j(\boldsymbol{x}),0)
\end{gather}
where each $L_i$ is an affine linear function. The ReLU function is given by $x \mapsto \max(x,0)$; note that the minima involved may be swapped for maxima. The case of disallowing skip connections is given when $L_0(\boldsymbol{x})$ is a constant. In this paper we will use an equivalent formulation of hinge functions, as a continuous piecewise linear function given by 
\begin{gather}\label{def:hingemod}
    h(\boldsymbol{x}) = L_0(\boldsymbol{x}) + \sum_{i=1}^s |L_i(\boldsymbol{x})| - \sum_{j=s+1}^t |L_j(\boldsymbol{x})|.
\end{gather}
 The equivalence of the two definitions is immediate by using $\min(x,0) = \frac{|x|+x}{2}.$ In the case $d=2$, where we have most of our examples, each $L_i$ is of the form $ax + by + c$, where $\boldsymbol{x}=(x,y).$

Hinge functions are functions that can be represented by a single hidden layer neural network with ReLU activation function, as has been pointed out in \cite{Wang_Sun:05}. In that paper, the authors show that a continuous piecewise linear function which can be written as a linear combination of maxima with at most $n$ affine linear functions, can be represented by a ReLU neural network with the number of hidden layers bounded by $\lceil \log_2(n) \rceil$. The paper \cite{Hertrich:23} shows that function $(x,y)\mapsto \max(x,y,0)$ is not a hinge function; it requires two hidden layers. An algorithm that decides whether a given piecewise linear function is a hinge function is given in \cite{Koutschan:24}. Moreover, there are approximation theorems (e.g. \cite{Hornik:89}) stating that any continuous function can be approximated on a compact set by hinge functions. 

A prominent task of a neural network is {\em classification}. In the simplest case (one-class-classification), an input vector is accepted if and only if the value of the output function is positive. Since the ReLU function does not change positivity, an input vector is accepted if and only if the hinge function which is fed into the output neuron has positive value. In this paper, we investigate the question of which subsets of $\R^d$ may be accepted (classified) by a single layer neural network with ReLU activation. For a hinge function $h:\mathbb R^d \rightarrow \mathbb R$, the set of points where $h$ takes a positive value is called the positivity set of $h$. A subset $P \subseteq \mathbb R^d$ is called a \textit{positivity set} if it is the positivity set of some hinge function. The question of which subsets of $\mathbb R^d$ can be classified by a hinge function is then equivalent to the following question:
%
    which subsets $P \subseteq \mathbb R^d$ are positivity sets of a hinge function?

We can give various examples of sets which are indeed positivity sets, but a general characterisation may be difficult to obtain - we instead focus on a local case in dimension $2$, where $P$ is an open cone. Our main result is a characterisation for such positivity sets, and also a local necessary condition for a general point set to be a positivity set in $\mathbb R^d$. From here on, all cones mentioned are open cones. In order to state our results, we will need some preliminary definitions. 

\begin{figure}[ht]
    \centering
    \includegraphics[width=0.48\linewidth]{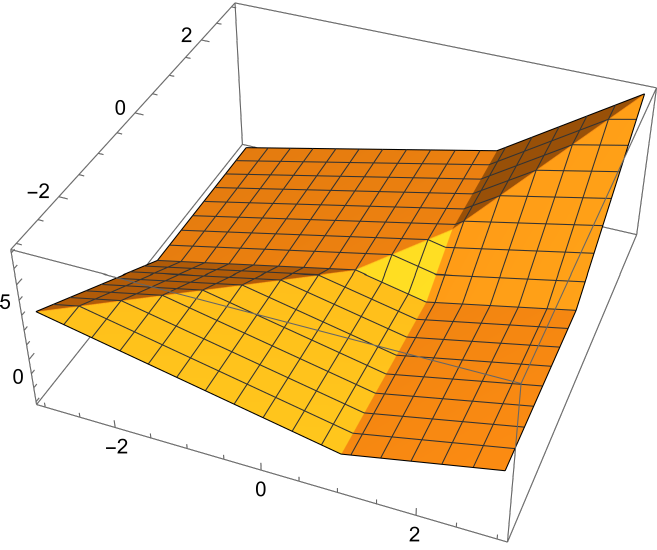}
    \hspace{15mm}
    \includegraphics[width=0.4\linewidth]{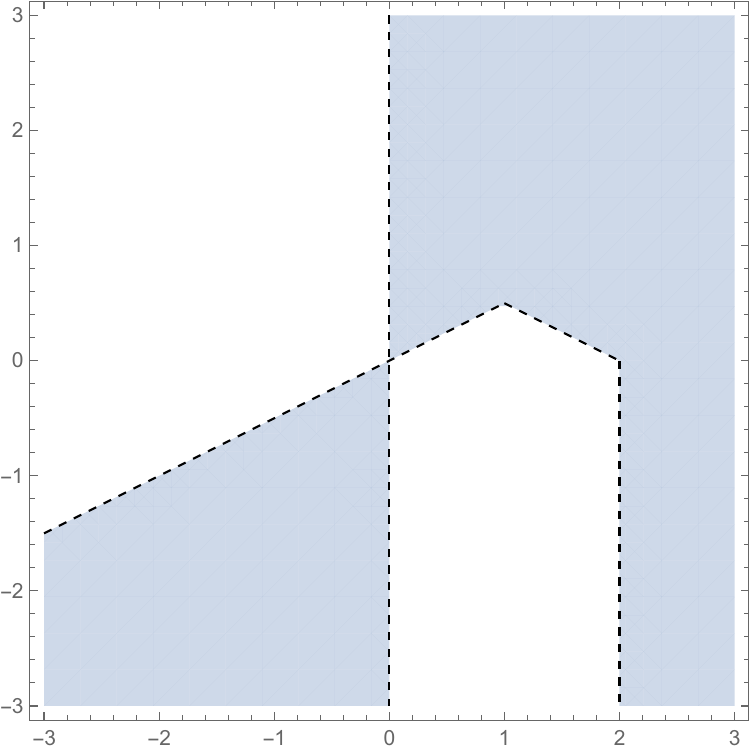}
    \caption{A hinge function and its corresponding positivity set. The hinge function is given by $$h(x,y) =x-1 +|x-1|+|y|-|y-x|.$$}
    \label{fig:hingeandpositive}
\end{figure}

\section{Preliminaries}

\subsection{Statement of main results}

Let $P$ be a subset of $\mathbb R^d$. If $P$ is a finite union of open polytopes (maybe unbounded), then we call it a \textit{polytopal set}. A polytopal set which is also a cone (that is, if it contains $p$, then it also contains $\lambda p$ for every $\lambda>0$), we will name a \textit{polytopal cone}. Note that any positivity set of a hinge function must be a polytopal set.

Let $C$ be a polytopal cone. Our results can be stated with reference to two sets which depend on $C$, given by the following definitions.
$$R_C := \{p \in C : -p \notin C\}, \quad G_C := \{p \in \partial C : -p \in \partial C\}.$$
In words, the set $R_C$ consists of those points of $C$ which are \textit{not} centrally symmetric. The set $G_C$ is itself centrally symmetric, and consists of any full lines through the origin which are contained in the boundary of $C$. From the next section on, we will drop the index $C$ since it will be clear from the context. 

We can now state our main result. In the following theorem, $\langle G_C \rangle$ refers to the linear span of $G_C$, and $\conv (R_C)$ is the convex hull of $R_C$.

\begin{theorem}\label{thm:main}
    Let $C \subseteq \mathbb R^2$ be a polytopal cone. Then $C$ is the positivity set of a hinge function if and only if
    $$\langle G_C \rangle \cap \conv (R_C) = \emptyset.$$
\end{theorem}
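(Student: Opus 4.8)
The plan is to prove both directions by analyzing the local structure of a hinge function $h$ at the origin. Since the positivity set is a cone, we may assume $h$ is positively homogeneous near the origin, hence (after dropping constant terms from the $L_i$) we may take $h(\boldsymbol x) = L_0(\boldsymbol x) + \sum_{i=1}^s |L_i(\boldsymbol x)| - \sum_{j=s+1}^t |L_j(\boldsymbol x)|$ with all $L_i$ linear. The key observation is that each term $|L_i|$ is a convex piecewise linear function whose ``crease'' is the line $\ker L_i$, and that on each of the two open half-planes cut out by $\ker L_i$ the function $|L_i|$ agrees with a linear functional. Thus $h$ is linear on each cell of the arrangement of lines $\{\ker L_i\}$, and $C$ is the union of the open cells (and open boundary rays) on which the corresponding linear functional is positive.

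For the \textbf{necessity} direction, suppose $C$ is a positivity set. I would argue that $\langle G_C\rangle \cap \conv(R_C) \neq \emptyset$ leads to a contradiction. A full line $\ell \subseteq \partial C$ through the origin means $h \le 0$ on $\ell$ and $h = 0$ on $\ell$ by continuity (points of $\ell$ are limits of points of $C$ from at least one side, so $h\ge 0$, and they are boundary points so $h \le 0$); actually one must be a little careful, but the upshot is $h$ vanishes on each line in $G_C$, hence on $\langle G_C\rangle$. If $\dim\langle G_C\rangle = 2$ then $h \equiv 0$, so $C = \emptyset$ and there is nothing to check; if $\dim\langle G_C\rangle = 1$, then $h$ vanishes on a line $\ell_0$, and I want to show $R_C$ cannot meet $\ell_0$ — but $R_C \subseteq C$ and $h>0$ on $C$ while $h=0$ on $\ell_0$, contradiction, provided $R_C$ truly avoids $\ell_0$; the content is then that $\langle G_C\rangle = \ell_0$ forces $\conv(R_C)$ to avoid $\ell_0$, which follows because $R_C$ lies in one of the two closed half-planes bounded by $\ell_0$ minus $\ell_0$ itself (if $R_C$ had points on both sides arbitrarily "close" to spanning across, convexity of $\conv(R_C)$ and the structure of $R_C$ near the symmetric part would be violated). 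I expect the honest version of this paragraph to require classifying the possible combinatorial types of $C$ relative to the lines $\ker L_i$.

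For the \textbf{sufficiency} direction, assume $\langle G_C\rangle \cap \conv(R_C) = \emptyset$. By a separating hyperplane (here, a line through the origin) argument, there is a linear functional $L_0$ with $L_0 \ge 0$ on $R_C$, $L_0 = 0$ on $\langle G_C\rangle$, and $L_0 > 0$ on the part of $R_C$ off $\langle G_C\rangle$ — more precisely, since $\conv(R_C)$ is a cone disjoint from the subspace $\langle G_C\rangle$, it is separated from it by a hyperplane containing $\langle G_C\rangle$. Then I would build $h$ by starting from $L_0$ and adding correction terms $\pm|L_i|$ supported on the lines $\ker L_i$ running along the extreme rays of $C$, chosen so that $h$ becomes strictly positive exactly on $C$: each ray of $\partial C$ that is \emph{not} in $G_C$ should be "pushed up" on the $C$-side and "pushed down" on the complement side using a term $-|L|$ or $+|L|$ whose crease is that ray, while rays in $G_C$ are already on the zero set of $L_0$ and should stay there. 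Finishing this requires checking, cell by cell of the resulting line arrangement, that the signs work out; the main obstacle — and the step I expect to be most delicate — is the bookkeeping that guarantees the constructed $h$ is positive on \emph{all} of $C$ and nonpositive on \emph{all} of its complement simultaneously, particularly handling the interaction between the $G_C$ rays (where $h$ must stay $\le 0$ outside $C$ yet $=0$ on the boundary lines) and ensuring no spurious positivity leaks across $\langle G_C\rangle$.
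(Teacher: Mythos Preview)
Your necessity argument has a genuine gap. You write that ``$h$ vanishes on each line in $G_C$, hence on $\langle G_C\rangle$,'' and then in the case $\dim\langle G_C\rangle=2$ conclude $h\equiv 0$. This inference is false: $h$ is only piecewise linear, and a piecewise linear function can vanish on many lines through the origin without vanishing identically. For example $h(x,y)=|x|-|y|$ vanishes on the lines $y=\pm x$ but is not zero. So knowing $h=0$ on the (finitely many) lines making up $G_C$ tells you nothing about $h$ on the rest of $\langle G_C\rangle$. Likewise, in the $\dim\langle G_C\rangle=1$ case you correctly observe $R_C\cap\ell_0=\emptyset$, but you need $\conv(R_C)\cap\ell_0=\emptyset$, i.e.\ that $R_C$ lies strictly on one side of $\ell_0$; you acknowledge this is ``the content'' and then do not supply an argument.

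The missing idea is that while $h$ itself is not linear, its \emph{odd part} is. The paper's Lemma~\ref{lem:localgradient} gives a fixed vector $\boldsymbol{\gamma}$ with $\nabla h(q)+\nabla h(-q)=\boldsymbol{\gamma}$ for all $q$ off the mountains and valleys, which yields the identity $h(p)-h(-p)=\boldsymbol{\gamma}\cdot p$ for all $p$. Now the argument goes through cleanly: for $v\in R_C$ one has $h(v)>0$, $h(-v)\le 0$, hence $\boldsymbol{\gamma}\cdot v>0$; for $g\in G_C$ one has $h(g)=h(-g)=0$, hence $\boldsymbol{\gamma}\cdot g=0$. Thus the single linear functional $\boldsymbol{\gamma}$ is strictly positive on $R_C$ (hence on $\conv(R_C)$) and zero on $G_C$ (hence on $\langle G_C\rangle$), giving the disjointness. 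Your argument was implicitly trying to use $h$ itself in the role of this linear separator; passing to the odd part is what makes that work.

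For sufficiency your outline is in the right spirit but skips the structural step that makes the bookkeeping tractable. The paper first proves (Lemma~\ref{lem:classification}) that a positively homogeneous continuous piecewise linear function on $\R^2$ is a hinge function if and only if it has the form $s(\boldsymbol{x})+\boldsymbol{e}\cdot\boldsymbol{x}$ with $s$ centrally symmetric. This reduces the construction to building a symmetric $s$ on one half-plane with prescribed boundary values on the rays of $\partial C$, which the paper does by inserting an auxiliary ray inside each sector and interpolating linearly, followed by a case analysis. Your plan of ``adding $\pm|L_i|$ along the extreme rays of $C$'' is not quite the same and, without the symmetric/linear decomposition, you have no mechanism to certify that whatever piecewise linear function you end up with is actually a hinge function.
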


Our second result shows that for any polytopal set $P \subseteq \mathbb R^d$, there is a local condition given by the analogue of the statement in Theorem \ref{thm:main} which must be fulfilled locally, or the set cannot be a positivity set. To state this result, we need the notion of a \textit{local cone}. 

Let $P \subseteq \mathbb R^d$ be a polytopal set. For such a set, each point $q \in \mathbb R^d$ has a \textit{local cone} $C_q$ with respect to $P$. This local cone is defined in the following way: take $\epsilon >0$ small enough that the intersection of the $\epsilon$-ball $B_{\epsilon}(q)$ around $q$ with $P$ looks like a cone, that is, if $p \in P \cap B_{\epsilon}(q)$, then the whole line segment from $q$ through $p$ to the boundary of the ball, is contained within $P$. We then translate the entire set so that $q$ is moved to the origin, and then we make it a cone by adding all multiples by postive scalars. We name the resulting polytopal cone $C_q$.

\begin{figure}[ht]
    \centering
    \includegraphics[width=0.3\linewidth]{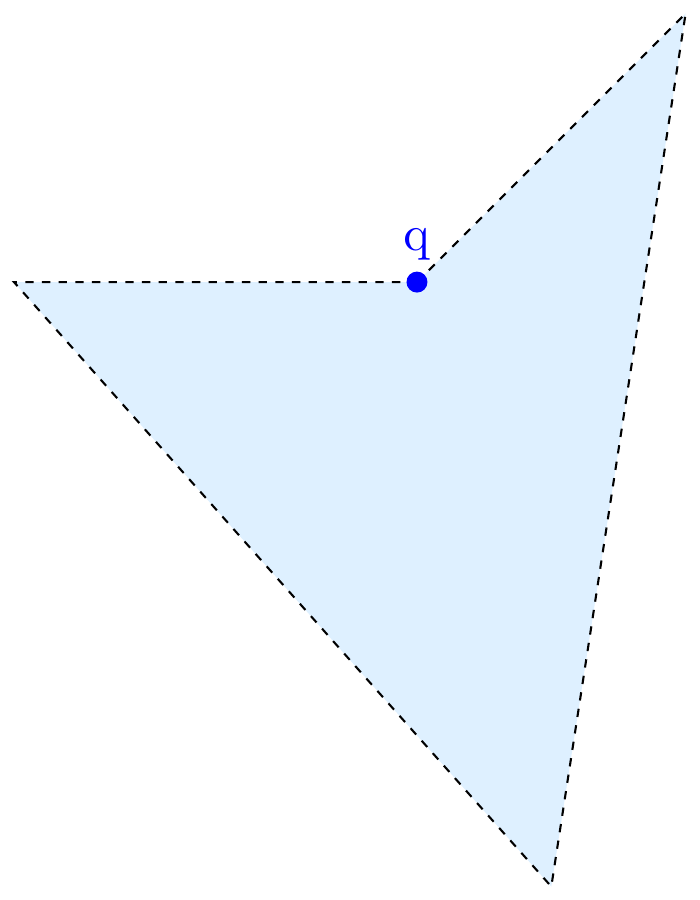}
    \hspace{30mm}
    \includegraphics[width=0.35\linewidth]{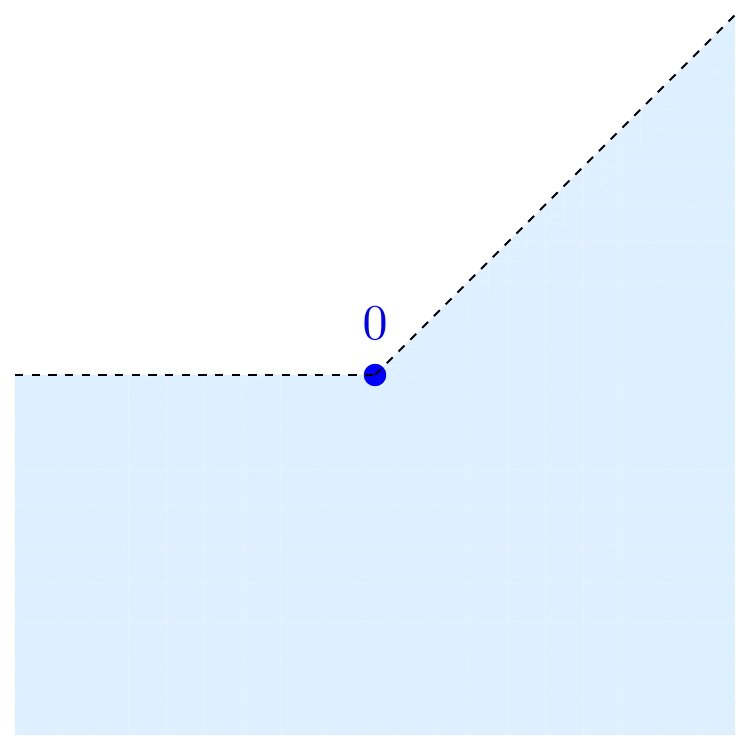}
    \caption{The point $q$ in the blue set $P$, and its local cone $C_q$.}
    \label{fig:localcone}
\end{figure}

We can now state our second result.

\begin{theorem}\label{thm:localcond}
    Let $P$ be a polytopal subset of $\mathbb R^d$. If there is a point $q \in \mathbb R^d$ such that for the cone $C_q$ we have $\langle G \rangle \cap \conv (R) \neq \emptyset$, then $P$ cannot be a positivity set.
\end{theorem}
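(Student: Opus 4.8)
The plan is to prove the contrapositive: assume that $P\subseteq\R^d$ is the positivity set of a hinge function $h$, and show that then for \emph{every} point $q$ the local cone $C_q$ satisfies $\langle G\rangle\cap\conv(R)=\emptyset$. The argument has two independent ingredients. The first is a \emph{localisation} step: $C_q$ is itself the positivity set of a hinge function on $\R^d$. The second is a \emph{necessity} step: any cone which is a positivity set of a hinge function satisfies $\langle G\rangle\cap\conv(R)=\emptyset$; this is, in effect, the ``only if'' direction of Theorem~\ref{thm:main}, and the proof sketched below works in every dimension.

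\emph{Localisation.} Write $h(\boldsymbol x)=L_0(\boldsymbol x)+\sum_{i=1}^s|L_i(\boldsymbol x)|-\sum_{j=s+1}^t|L_j(\boldsymbol x)|$ with each $L_i(\boldsymbol x)=\boldsymbol a_i\cdot\boldsymbol x+c_i$, and fix $q$. For $\boldsymbol v$ in a sufficiently small ball $B_\epsilon(\boldsymbol 0)$ the sign of $L_i(q+\boldsymbol v)$ agrees with that of $L_i(q)$ whenever $L_i(q)\neq 0$, so such a term $|L_i(q+\boldsymbol v)|$ equals $\pm(L_i(q)+\boldsymbol a_i\cdot\boldsymbol v)$, an affine function of $\boldsymbol v$, while a term with $L_i(q)=0$ equals $|\boldsymbol a_i\cdot\boldsymbol v|$. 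Collecting terms gives, for all $\boldsymbol v\in B_\epsilon(\boldsymbol 0)$,
$$ h(q+\boldsymbol v)=h(q)+\Lambda(\boldsymbol v)+E(\boldsymbol v), $$
where $\Lambda$ is a linear form and $E(\boldsymbol v)=\sum_{i\le s,\ L_i(q)=0}|\boldsymbol a_i\cdot\boldsymbol v|-\sum_{j>s,\ L_j(q)=0}|\boldsymbol a_j\cdot\boldsymbol v|$. If $h(q)\neq 0$ then (shrinking $\epsilon$) $h$ has constant sign on $B_\epsilon(q)$, so $C_q$ equals $\emptyset$ or $\R^d$; in both cases $R=\emptyset$, hence $\conv(R)=\emptyset$ and the hypothesis of the theorem fails. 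If $h(q)=0$ then $g:=\Lambda+E$ is a hinge function which is positively homogeneous of degree $1$, so $\{g>0\}$ is an open cone; since $h(q+\boldsymbol v)=g(\boldsymbol v)$ on $B_\epsilon(\boldsymbol 0)$ we have $(P-q)\cap B_\epsilon(\boldsymbol 0)=\{\boldsymbol v\in B_\epsilon(\boldsymbol 0):g(\boldsymbol v)>0\}$, and coning this set off recovers exactly $\{g>0\}$ by homogeneity of $g$. Thus $C_q=\{g>0\}$ is the positivity set of the hinge function $g$.

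\emph{Necessity.} Let $g=\Lambda+E$ be a homogeneous hinge function as above, with $\Lambda$ a linear form and $E=\sum_i|\ell_i|-\sum_j|\ell_j|$ a difference of absolute values of linear forms, and put $C:=\{g>0\}$. Because $E$ is an even function and $\Lambda$ is odd, $g(\boldsymbol v)-g(-\boldsymbol v)=2\Lambda(\boldsymbol v)$. By continuity of $g$, every $p\in\partial C$ satisfies $g(p)=0$: it is a limit of points of $C$, so $g(p)\ge 0$, and it lies outside the open set $C$, so $g(p)\le 0$. Hence for $p\in G$ we get $g(p)=g(-p)=0$, so $\Lambda(p)=0$; since $\ker\Lambda$ is a linear subspace, $\langle G\rangle\subseteq\ker\Lambda$. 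For $r\in R$ we have $g(r)>0\ge g(-r)$, so $\Lambda(r)>0$, and by linearity $\Lambda>0$ on all of $\conv(R)$. Therefore $\langle G\rangle\cap\conv(R)\subseteq\ker\Lambda\cap\{\Lambda>0\}=\emptyset$. Applying this to $C=C_q$ finishes the contrapositive.

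\emph{Main obstacle.} The conceptual core is the localisation step, and the point requiring the most care is the precise identification of the $\epsilon$-ball cone $C_q$ (as defined before the statement) with $\{g>0\}$ — in particular, that the homogenisation $g$ of $h$ at $q$ is a genuine hinge function and that the cone-completion in the definition of $C_q$ recovers its positivity set without loss. The necessity step is a short sign computation; it is worth stressing that it uses no restriction on the dimension, which is exactly why the two-dimensional Theorem~\ref{thm:main} yields information about polytopal sets in arbitrary $\R^d$.
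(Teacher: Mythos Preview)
Your proof is correct and follows the same overall architecture as the paper: first localise the hinge function at $q$ to obtain a positively homogeneous hinge function $g$ with positivity set $C_q$, then extract from $g$ a linear form that is strictly positive on $R$ and vanishes on $G$, forcing $\langle G\rangle\cap\conv(R)=\emptyset$. The localisation step is essentially identical in both.

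The difference lies in how the linear form is produced. The paper invokes Lemma~\ref{lem:localgradient} (the antipodal gradient identity $\nabla h(q)+\nabla h(-q)=\boldsymbol{\gamma}$) to obtain a vector $\boldsymbol{\gamma}$, and then argues via $h(p)=\nabla h(p)\cdot p$ that $h(p)-h(-p)$ is (a constant multiple of) $\boldsymbol{\gamma}\cdot p$. You instead observe directly that after localisation the hinge function splits as $g=\Lambda+E$ with $\Lambda$ linear and $E$ even, so $g(\boldsymbol v)-g(-\boldsymbol v)=2\Lambda(\boldsymbol v)$ is immediate. This is a cleaner route to the same identity: it bypasses Lemma~\ref{lem:localgradient} and any discussion of gradients or of points lying on mountains/valleys, and it makes transparent that the argument is dimension-independent. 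What the paper's formulation buys is that Lemma~\ref{lem:localgradient} is stated in a form reusable at points where not all hyperplanes pass through $q$, though in the present proof only the through-the-origin case is needed.
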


Before proving Theorems \ref{thm:main} and \ref{thm:localcond}, we need some more definitions and preliminary results.

\subsection{Valleys, mountains, and local gradients}

Given a hinge function $h: \mathbb R^d \rightarrow \mathbb R$, $h(\boldsymbol{x}) = L_0(\boldsymbol{x}) + \sum_{i=1}^s |L_i(\boldsymbol{x})| - \sum_{j=s+1}^t |L_j(\boldsymbol{x})|$, the points of non-differentiability of $h$ within $\mathbb R^d$ correspond to the zero sets of the affine linear functions $L_i$. If such an $L_i$ appears with coefficient $1$ in $h$, we will call $Z(L_i) \subseteq \mathbb R^d$ a \textit{valley}. If instead it comes with coefficient $-1$, we call it a \textit{mountain}. These names come from visualising the graph of $h$ in the case $d=2$. Locally around a point $p$ not lying on a mountain or valley, $h$ is an affine linear function.

The mountains and valleys yield a tessellation of the input space, such that $h$ is affine linear in the interior of each tile. If we consider crossing from one tile to another via a single mountain or valley $Z(L_i)$, the local representation of $h$ changes by either adding or taking away $2L_i$. Each tile of the tessellation corresponds to a vector in $\{-1,1\}^{s+t}$, which is determined by which side of each $Z(L_i)$ the tile lies on (formally, whether $L_i$ is positive or negative in the tile). This vector is enough to give the local representation of $h$ within the tile.

\begin{figure}[ht]
    \centering
    \includegraphics[width=0.45\linewidth]{hingefunction1.pdf}
    \hspace{15mm}
    \includegraphics[width=0.4\linewidth]{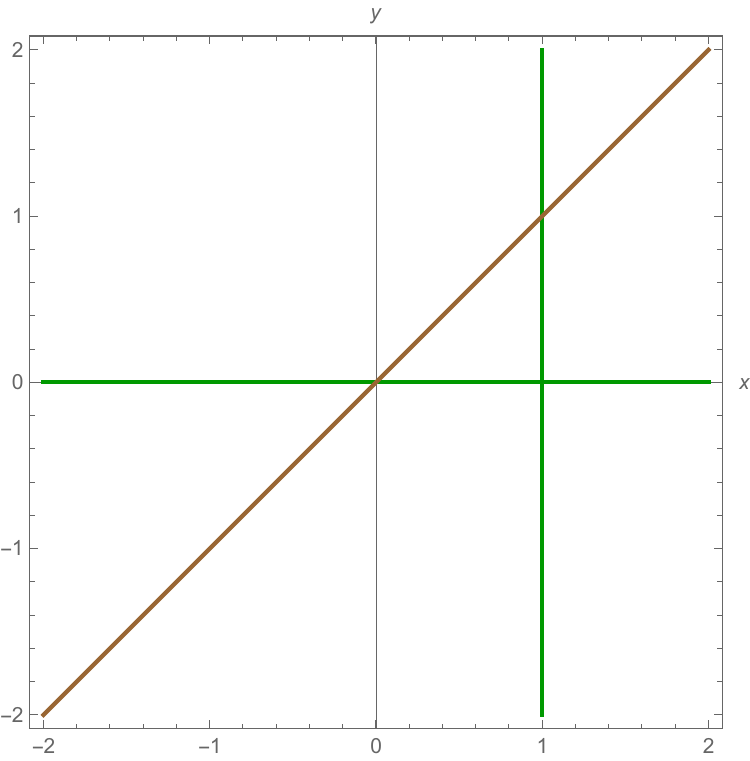}
    \caption{The hinge function $h$ from Figure \ref{fig:hingeandpositive} and its mountain/valley arrangement. The mountain is brown and the valleys are green.}
    \label{fig:enter-label}
\end{figure}

We begin with a useful lemma regarding the local behaviour of the gradient of hinge functions. We do not claim originality, and this result can be found (with different notations) in \cite{tao2022piecewise, chua1988canonical}.

\begin{lemma}\label{lem:localgradient}
    Let $h: \mathbb R^d \rightarrow \mathbb R$ be a hinge function, and let $p \in \mathbb R^d$. Let $U_p$ be an open ball centered at $p$, which is chosen to be small enough that any valley or mountain which intersects $U_p$ passes through $p$. Then there exists a vector $\boldsymbol{\gamma} \in \mathbb R^d$ such that for any $q \in U_p$ not lying on a valley or mountain, we have
    $$\nabla h(q) + \nabla h(2p-q) = \boldsymbol{\gamma}.$$
\end{lemma}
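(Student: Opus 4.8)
The plan is to analyze how the local representation of $h$ changes as one moves from a point $q$ near $p$ to its reflection $2p - q$. Fix the small ball $U_p$ guaranteed by the hypothesis, so that every mountain or valley meeting $U_p$ is a hyperplane through $p$. Write the mountains and valleys through $p$ as $Z(L_{i_1}), \dots, Z(L_{i_k})$ with signs $\sigma_1, \dots, \sigma_k \in \{+1, -1\}$ (where $\sigma = +1$ for a valley and $\sigma = -1$ for a mountain); all other $L_j$ are of fixed sign throughout $U_p$ and so contribute a fixed affine linear piece to $h$ on $U_p$. Thus on $U_p$ we may write $h = A + \sum_{m=1}^k \sigma_m |L_{i_m}|$ where $A$ is affine linear, hence $\nabla h(q) = \nabla A + \sum_m \sigma_m \, \mathrm{sgn}(L_{i_m}(q)) \, \nabla L_{i_m}$ at any $q \in U_p$ off the arrangement.

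The key observation is that since each $Z(L_{i_m})$ passes through $p$, the affine function $L_{i_m}$ satisfies $L_{i_m}(2p - q) = L_{i_m}(p) + (\nabla L_{i_m}) \cdot (p - q) = -\big(L_{i_m}(p) + (\nabla L_{i_m})\cdot(q - p)\big) = -L_{i_m}(q)$, using $L_{i_m}(p) = 0$. Therefore $\mathrm{sgn}(L_{i_m}(2p - q)) = -\mathrm{sgn}(L_{i_m}(q))$ for every $m$, provided $q$ is off the arrangement (so that $2p-q$ is too, by the same reflection identity). Adding the two gradient expressions, every sign term cancels in pairs, and we are left with
\begin{equation*}
    \nabla h(q) + \nabla h(2p - q) = 2 \nabla A =: \boldsymbol{\gamma},
\end{equation*}
a vector independent of $q$, which is exactly the claim.

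The routine points to check are: that $U_p$ can indeed be chosen with the stated property (this is immediate since there are finitely many hyperplanes $Z(L_j)$ and $p$ lies on only some of them, so shrink $U_p$ to avoid the others); that reflection through $p$ maps $U_p$ to itself and maps the complement of the arrangement to itself; and the elementary affine identity $L(2p-q) = -L(q)$ when $L(p) = 0$. None of these is a real obstacle. The only mild subtlety — more a matter of careful bookkeeping than difficulty — is making sure the $L_j$ that do \emph{not} vanish at $p$ are genuinely of constant sign on all of $U_p$ (so their contribution to $\nabla h$ is the same at $q$ and at $2p - q$, and gets absorbed into $\nabla A$); this is precisely what the hypothesis on $U_p$ buys us. Hence I expect the proof to be short, with the reflection identity $L_{i_m}(2p-q) = -L_{i_m}(q)$ as its one genuine idea.
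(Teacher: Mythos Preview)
Your proof is correct and follows essentially the same approach as the paper's: both split the $L_i$ into those vanishing at $p$ (whose signs flip under the reflection $q\mapsto 2p-q$, so their gradient contributions cancel) and those not vanishing at $p$ (whose signs are constant on $U_p$ and get absorbed into an affine part), yielding a $q$-independent sum $\boldsymbol{\gamma}=2\nabla A$. The paper phrases this via the sign vector $c(q)\in\{-1,1\}^{s+t}$ and the observation that $c(q)+c(-q)$ is constant, but the content is identical.
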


\begin{proof}
    We translate so that $p$ is the origin, and call $U$ the small ball around the origin. For any point $q \in U$ not lying on a valley or mountain, let $c(q)$ be the $\{-1,1\}^{s+t}$ vector giving the sign of each $L_i$ evaluated at $q$. We make two simple observations; firstly, the entries of $c(q)$ corresponding to the $L_i$ whose zero sets do not intersect $U$, are constant throughout $U$. Secondly, for any linear function $L_i$ which does pass through $0$, $c(q)$ and $c(-q)$ are of opposite sign on the entry for $L_i$. We then conclude that the sum $c(q) + c(-q)$ does not depend on $q$ itself - it takes value $0$ on the entries corresponding to $L_i$ which pass $0$; all other entries were constant throughout $U$ and so these entries are simply doubled. This implies that the sum of local representations of $h$ at $q$ and $-q$ is constant in $U$, and therefore so are their gradients.
\end{proof}

We remark that in this proof, the size of the ball $U_p$ needed depends only on where the mountains and valleys not passing through $p$ lie; if \textit{all} mountains and valleys pass through $p$, then the ball $U_p$ can be skipped, and the lemma applies for any $q \in \mathbb R^d$ not lying on a valley or mountain.

\section{Examples}

Before proving our main results, we give some examples of hinge functions, and their corresponding positivity sets. Our first example is the interior of a triangle. Despite being the simplest polygon, it is not trivial to find a hinge function representing a triangle - in contrast to, for instance, a square, which can be easily given by $1-|x|-|y|$. In fact, the construction we use for a triangle requires five mountains/valleys.

Assume that the vertices of the triangle are $(0,0)$, $(2,0)$, and $(2,0)$. We need to have a mountain through each vertex. The linear functions defining the mountains can be chosen to be $L_1(x,y)=x-y$, $L_2(x,y)=2x+y-2$, and $L_3(x,y)=x+2y-2$. The mountain through one vertex intersects the opposite side, and there must be at least one valley through each intersection point. So we choose $L_4(x,y)=x-1$ and $L_5(x,y)=y-1$ for the valleys.

The hinge function can now be found by solving a linear system of equations expressing the condition that a general linear combination of $1$, $x$, $y$, $|L_1(x,y)|,\dots,|L_5(x,y)|$ vanishes at the the three vertices and the three intersection points. One solution is
\[ h(x,y) = x+y-2|x-y|-|x+2y-2|-|2x+y-2|+2|x-1|+2|y-1|. \]
Fortunately (or rather: by careful choice of the solution), the coefficients of mountains are negative and the coefficients of valleys are positive. This hinge function is shown below.
\begin{figure}[ht]
    \centering
    \includegraphics[width=0.45\linewidth]{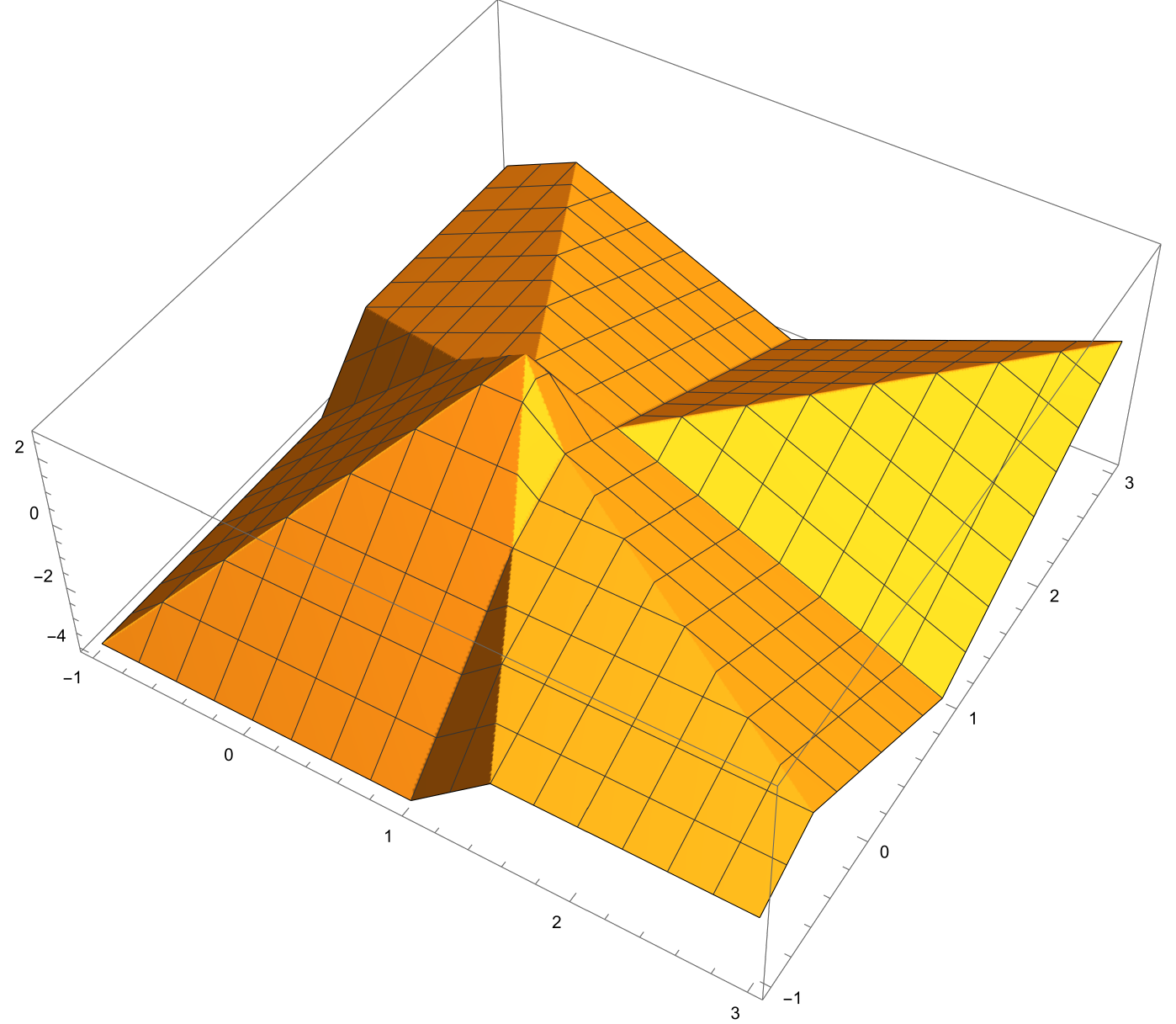}
    \hspace{15mm}
    \includegraphics[width=0.4\linewidth]{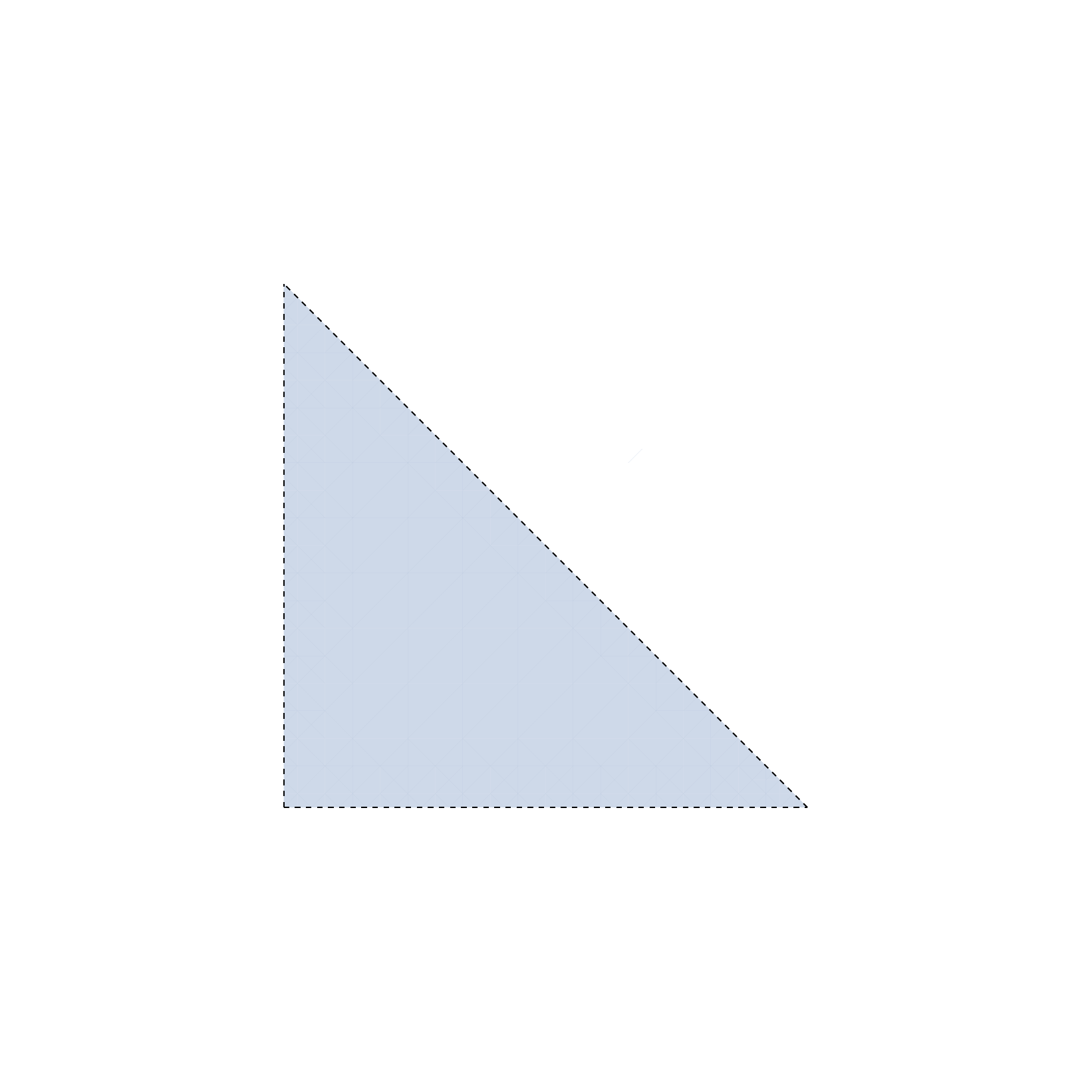}
    \caption{The hinge function $h$ whose positivity set is a triangle.}
    \label{fig:triangle}
\end{figure}

As a second example, we give a hinge function whose positivity set consists of four connected components. Such examples can be extended to any (even) number of connected components. The hinge function is given by the equation
\[h(x,y) = |x| + |y| + |y-2x| +|2y-x| - |y-3x| - |3y-x|-1.\]
This function, along with its positivity set, is shown in Figure \ref{fig:4-fan}.
\begin{figure}[ht]
    \centering
    \includegraphics[width=0.48\linewidth]{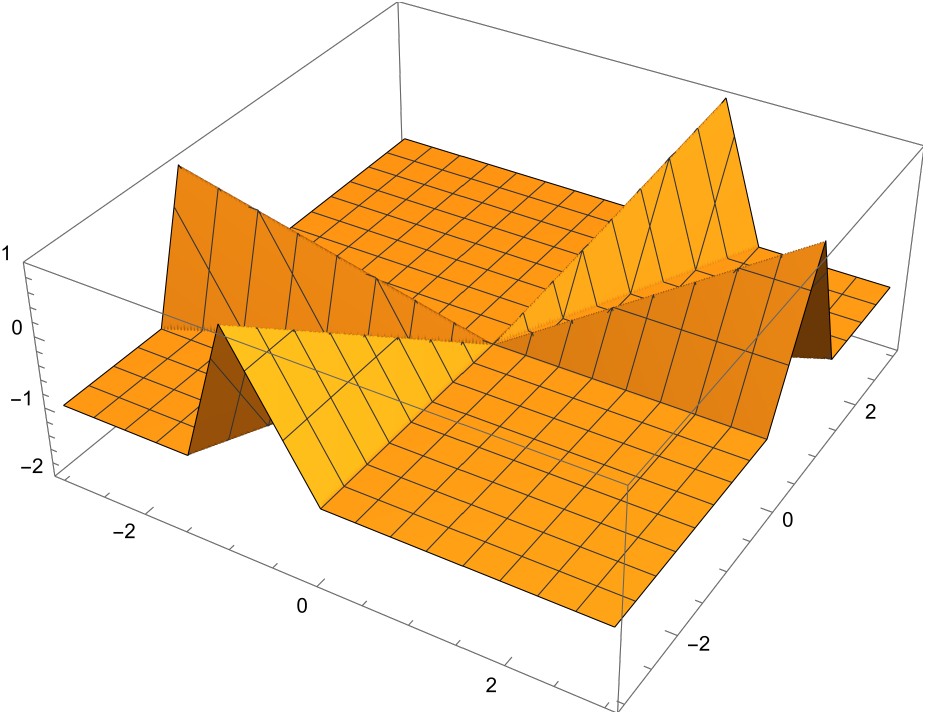}
    \hspace{15mm}
    \includegraphics[width=0.4\linewidth]{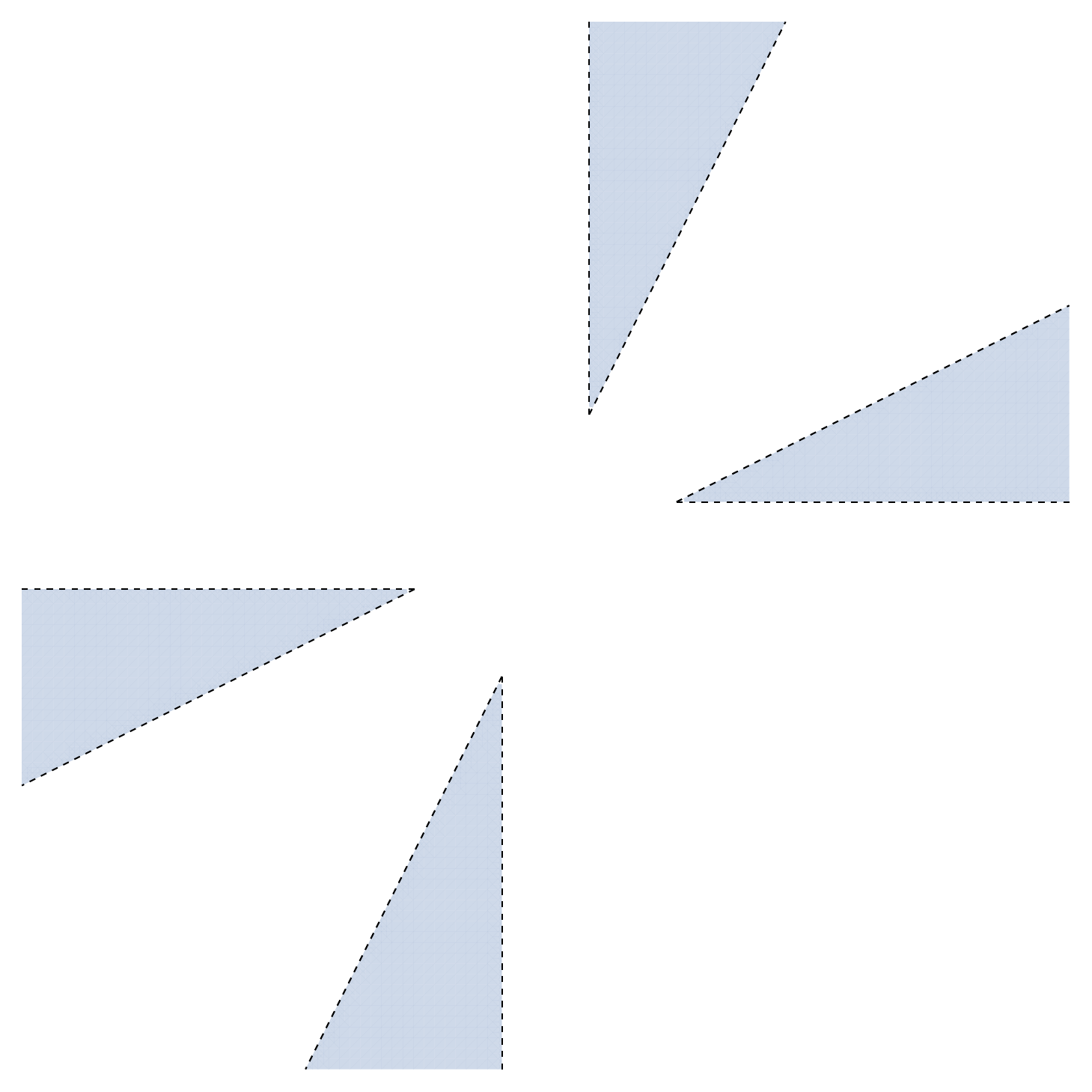}
    \caption{A hinge function whose positivity set has four connected components.}
    \label{fig:4-fan}
\end{figure}

\section{A local condition for positivity sets} \label{sec:localcond}

In this section we prove Theorem \ref{thm:localcond}. The case $d=2$ gives one direction of Theorem \ref{thm:main}. 

\begin{proof}[Proof of Theorem \ref{thm:localcond}]
    Suppose that $P$ is a polytopal subset of $\mathbb R^d$, and that $q \in \mathbb R^d$ is such that $C_q$ gives a non-empty intersection $\langle G \rangle \cap \conv (R)$. Let $\alpha$ be a point in this intersection. Since the convex hull of $R$ is the union of all simplices with vertices in $R$, there exists some simplex with vertices $\{v_1,...,v_{d+1}\} \subseteq R$, such that $\alpha$ lies in the convex hull of $V$. Therefore, there exist real numbers $\lambda_1,...,\lambda_{d+1} \in \mathbb R_{\geq0}$, which are not all zero, such that $\sum_{i=1}^{d+1} \lambda_i v_i = \alpha$. Furthermore, since $\alpha \in \langle G \rangle$, there exist $\mu_1,...,\mu_n \in \mathbb R$ and $g_1,...,g_n \in G$ such that $\sum_{i=1}^n\mu_i g_i =\alpha$.

    For contradiction, assume that $P$ is the positivity set of a hinge function $h_0$. Such a hinge function yields a hinge function for the local cone $C_q$. To see this, we first assume, without loss of generality, that $q$ is the origin. Assume $h=L_0+\sum_{i=1}^s |L_i|-\sum_{j=s+1}^t |L_j|$. Any of the $L_i$ or $L_j$ that does not vanish at $q$ assumes a constant nonzero sign in a neighborhood of $q$. Since we only care about small neighborhoods, we take these summands out of the sum and add it to $L_0$, with the correct sign. If $L_0(q)\ne 0$ for the updated $L_0$, then the cone is either empty or equal to $\R^d$ -- these cases are trivial and we do not consider them any further. Otherwise, this gives a hinge function $h$ which has positivity set $C_q$. 
    
    Consider any $v_i \in R$. If $v_i$ does not lie on a mountain or valley of $h$, then we have $ \nabla h(v_i)\cdot v_i=h(v_i)   >0$, since $v_i \in C$. Furthermore, we must also have $ \nabla h(-v_i) \cdot (-v_i)=h(-v_i)  \leq 0$. By adding the first equation to the negation of the second, we then have
    $$(\nabla h (v_i) + \nabla h (-v_i)) \cdot v_i > 0.$$
    By Lemma \ref{lem:localgradient}, there exists some vector $\boldsymbol{\gamma}$ such that for all $v_i$ not on a valley or mountain, we have $\nabla h (v_i) + \nabla h (-v_i) = \boldsymbol{\gamma}$, and therefore
    $$\boldsymbol{\gamma} \cdot v_i > 0.$$
    We now wish to derive the same equation for $v_i$ which \textit{do} lie on a valley or mountain. To do this, we first prove the equation $h(p)-h(-p)=2\boldsymbol{\gamma}\cdot p$. If $p$ is not contained in mountain and not in a valley, then $h(p)=\nabla h(p)\cdot p$ and $h(-p)=\nabla h(-p)\cdot(-p)$, and the equation follows. If $p$ lies on a mountain or a valley, then the equation follows by continuity. Returning to $v_i$ and using $h(v_i)>0$ and $h(-v_i)\le 0$, we get $\boldsymbol{\gamma}\cdot v_i>0$.
    
    We can now finish the proof. We consider the dot product of $\boldsymbol{\gamma}$ and the linear combination $\sum_{i=1}^{d+1} \lambda_i v_i = \alpha$. We have
    $$0 < \sum_{i=1}^{d+1} \lambda_i (\boldsymbol{\gamma} \cdot v_i) =\boldsymbol{\gamma} \cdot \sum_{i=1}^{d+1} \lambda_i v_i = \boldsymbol{\gamma} \cdot \alpha =\sum_{i=1}^n\mu_i (\boldsymbol{\gamma} \cdot g_i).$$
    where the first inequality holds since $\boldsymbol{\gamma} \cdot v_i >0$, and not all of the $\lambda_i$ are zero. Pick any dot product $\boldsymbol{\gamma} \cdot g_i$. As $g_i$ and $-g_i$ both lie on the boundary of the positivity set, from the equation above we have that
    $$\boldsymbol{\gamma} \cdot g_i = \frac{h(g_i)+h(-g_i)}{2}=0 . $$
    Therefore the entire sum above is equal to zero, giving a contradiction.
\end{proof}

 An example of a cone which \textit{fails} to satisfy the local condition, and is therefore not a positivity set, is shown in Figure \ref{fig:3fan}; we call this set the \textit{3-fan}. The three dotted lines shown in the picture are given by the equations $y=0$, $y=x$, and $y=-x$. In this example, the set $R$ as defined by Proposition \ref{thm:localcond} is the cone itself, and since $G$ contains three entire lines we have $\langle G \rangle = \mathbb R^2$.

 \begin{figure}[ht]
     \centering
     \includegraphics[width=0.4\linewidth]{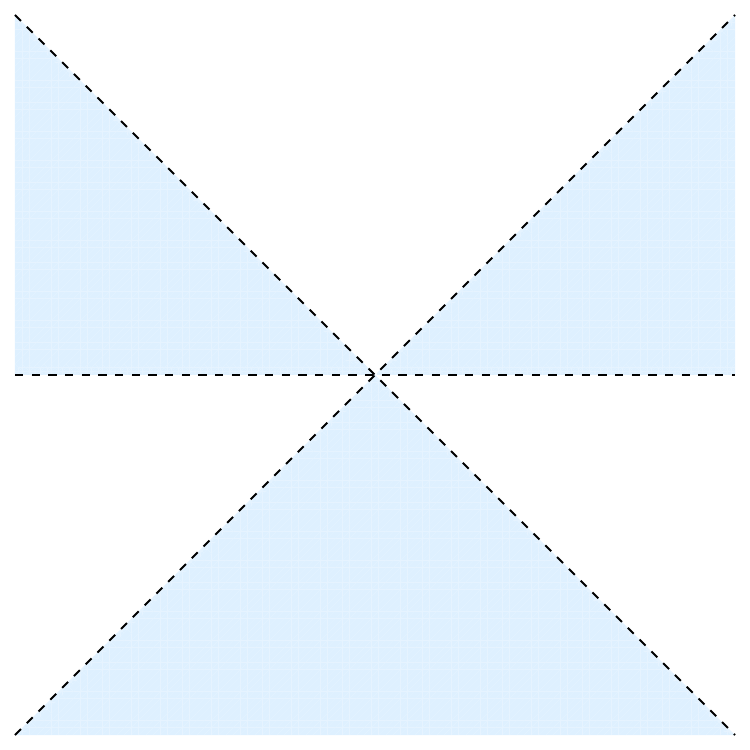}
     \caption{The 3-fan is not a positivity set.}
     \label{fig:3fan}
 \end{figure}

\begin{example}\label{eg:n_fan}
We consider a polytopal cone $C \subseteq \mathbb{R}^{2}$ which is a \textit{k-fan}, where $k = 2m, m \in \mathbb{N}$, i.e., $C$ is a union of an even number of wedges sharing the origin in common. As $C$ is a centrally symmetric set, therefore by Theorem \ref{thm:main} there exists a hinge function $f$ whose positivity set is $C$. \end{example}

In \cite{grigsby2022transversality} the authors show that for 1-layer ReLU networks corresponding to hinge functions $h:  \mathbb{R}^{n} \rightarrow \mathbb{R}$ that do not allow skip connections and whose hidden layer has dimension $n+1$, their positivity set can have at most one bounded connected component. Example \ref{eg:n_fan} shows that these assumptions are necessary for this claim, as in our general setting where skip connections are allowed, with a suitable choice of $k$ additional valleys, from the \textit{k-fan} we can construct a positivity set that has $k, k >1$ bounded connected components. 

\section{Other direction of Theorem \ref{thm:main}}

We will now prove the remaining direction of Theorem \ref{thm:main}, in the form of the following proposition.

\begin{prop}
    Let $C \subseteq \mathbb R^2$ be a polytopal cone. If we have $\langle G \rangle \cap \conv (R) = \emptyset$, then $C$ is the positivity set of some hinge function.
\end{prop}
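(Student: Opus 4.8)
The plan is to build a \emph{homogeneous} hinge function for $C$, that is, one of the form
\[
  h(\boldsymbol x)=\tfrac12\langle\boldsymbol\gamma,\boldsymbol x\rangle+\sum_{i}\epsilon_i\,|\langle\boldsymbol a_i,\boldsymbol x\rangle|,\qquad \epsilon_i\in\{-1,1\},
\]
with all linear forms vanishing at the origin, so that $h$ is positively homogeneous of degree $1$ and $\{h>0\}$ is automatically a cone. We may assume $0\notin C$ (otherwise $C=\emptyset$ or $C=\R^2$, which are trivial), so $C$ is a finite union of open sectors. Writing $g(\boldsymbol x)=\sum_i\epsilon_i|\langle\boldsymbol a_i,\boldsymbol x\rangle|$, note that $g$ is an even, positively homogeneous, continuous piecewise-linear function, and conversely every such function arises this way: across each of its kink lines through the origin the gradient jumps by a multiple of the normal to that line, and subtracting the corresponding multiples of absolute values of linear forms leaves an even linear function, hence $0$. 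So we must choose a vector $\boldsymbol\gamma$ and an even homogeneous piecewise-linear $g$. Since $h(\boldsymbol x)=\tfrac12\langle\boldsymbol\gamma,\boldsymbol x\rangle+g(\boldsymbol x)$ and $h(-\boldsymbol x)=-\tfrac12\langle\boldsymbol\gamma,\boldsymbol x\rangle+g(\boldsymbol x)$, for each antipodal pair $\{\boldsymbol x,-\boldsymbol x\}$ the requirement $\{h>0\}=C$ unwinds to: $g(\boldsymbol x)>\tfrac12|\langle\boldsymbol\gamma,\boldsymbol x\rangle|$ if $\boldsymbol x,-\boldsymbol x$ both lie in $C$; $g(\boldsymbol x)\le-\tfrac12|\langle\boldsymbol\gamma,\boldsymbol x\rangle|$ if neither does; $-\tfrac12\langle\boldsymbol\gamma,\boldsymbol x\rangle<g(\boldsymbol x)\le\tfrac12\langle\boldsymbol\gamma,\boldsymbol x\rangle$ (which forces $\langle\boldsymbol\gamma,\boldsymbol x\rangle>0$) if exactly one of them does; together with the continuity-forced values of $g$ on the bounding rays of $C$.

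Next I would choose $\boldsymbol\gamma$. If $R=\emptyset$ — equivalently $C$ is centrally symmetric, which is in particular the only possibility when $\langle G\rangle=\R^2$, since then $\conv(R)=\emptyset$ — take $\boldsymbol\gamma=\boldsymbol 0$; the problem collapses to finding an even $g$ with $\{g>0\}=C$, i.e. the centrally symmetric case, handled by the $2m$-fan construction of Example \ref{eg:n_fan}. So assume $R\neq\emptyset$ and $\langle G\rangle\neq\R^2$. From $\langle G\rangle\cap\conv(R)=\emptyset$ and a separating-hyperplane argument one gets $\boldsymbol\gamma$ with $\boldsymbol\gamma\perp\langle G\rangle$ and $\langle\boldsymbol\gamma,v\rangle>0$ for every $v\in\overline R$, and in fact for every bounding ray of $C$ whose opposite ray lies outside $\overline C$; the extra room needed beyond $\conv(R)$ itself is available because, when $\dim\langle G\rangle=1$, the set $G$ must be a full line equal to $\boldsymbol\gamma^\perp=\langle G\rangle$, so no such ray lies in $\langle G\rangle$, and when $\langle G\rangle=\{\boldsymbol 0\}$ there is nothing to avoid. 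Setting $s:=\langle\boldsymbol\gamma,\cdot\rangle$, one then verifies the structural consequence $R\subseteq\{s>0\}$, whence $C\cap\{s<0\}=-D$ for $D:=-(C\cap\{s<0\})\subseteq C\cap\{s>0\}$; thus, up to central symmetry, $C$ sits in the half-plane $\{s\ge0\}$, with centrally symmetric part $D\cup(-D)$ and $R=(C\cap\{s>0\})\setminus D$.

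It remains to build $g$. Being even, $g$ is determined by its restriction to $\{s\ge 0\}$ (reflect across the line $\boldsymbol\gamma^\perp$), so it suffices to prescribe a homogeneous piecewise-linear profile on the closed semicircle of directions with $s\ge0$: it must vanish on the two rays of $\boldsymbol\gamma^\perp$, stay above $\tfrac12 s$ on the sectors making up $D$, inside the window $(-\tfrac12 s,\tfrac12 s]$ on the sectors making up $R$, below $-\tfrac12 s$ on the remaining sectors, and take the forced intermediate values on the bounding rays. I would obtain it by starting from the baseline $\tfrac12|s|=\tfrac12|\langle\boldsymbol\gamma,\cdot\rangle|$ — which meets the two-sided (symmetric) requirement with equality — adding a small even term to make it strict there, and then subtracting a mountain $c_j\,|\langle\boldsymbol a_j,\cdot\rangle|$ along each bounding ray $r_j$ of $C$, with positive coefficients $c_j$ chosen recursively as one sweeps around the semicircle: large enough to drag the profile below $-\tfrac12 s$ on the sectors outside $C$, small enough to keep it inside the $R$-window on the $R$-sectors, and matched in value across each ray. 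Then $h(\boldsymbol x)=\tfrac12\langle\boldsymbol\gamma,\boldsymbol x\rangle+g(\boldsymbol x)$ is a hinge function with $\{h>0\}=C$.

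The main obstacle is this construction of $g$: the admissible window for $g$ on the $R$-sectors pinches down to the single value $0$ as one approaches a ray of $\langle G\rangle$ (where $s=0$), so the coefficients $c_j$ must be tuned carefully, and one has to check that the three kinds of constraint are mutually compatible all the way around the semicircle — and this is exactly where the hypothesis $\langle G\rangle\cap\conv(R)=\emptyset$ does its work, through the existence of $\boldsymbol\gamma$ and the structural fact $C\cap\{s<0\}=-D$. A smaller point to attend to is upgrading, in the second step, the ``open convex hull'' condition to strict positivity of $\langle\boldsymbol\gamma,\cdot\rangle$ on the relevant boundary rays.
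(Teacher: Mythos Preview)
Your structural reduction is exactly the paper's: decompose $h=s+\boldsymbol e\cdot\boldsymbol x$ with $s$ even, continuous, positively homogeneous, piecewise linear (their Lemma~\ref{lem:classification}), then pick $\boldsymbol e$ (your $\tfrac12\boldsymbol\gamma$) normal to $\langle G\rangle$ and strictly positive on $R$, and finally build $s$ to satisfy the three sign windows you correctly write down (their Cases 1/2/3). So far the two arguments coincide.

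Two gaps, one minor and one real. The minor one: your appeal to Example~\ref{eg:n_fan} for the centrally symmetric case is circular, since that example invokes Theorem~\ref{thm:main}, which is what you are proving.

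The real one is the construction of $g$. Your ``baseline $\tfrac12|s|$ minus mountains $c_j|\langle\boldsymbol a_j,\cdot\rangle|$, coefficients chosen recursively as one sweeps the semicircle'' does not work as stated. Subtracting $c_j|\langle\boldsymbol a_j,\cdot\rangle|$ with $\boldsymbol a_j$ normal to the ray $r_j$ leaves the value \emph{at} $r_j$ unchanged but alters the value at \emph{every other} ray; so each new mountain perturbs all the boundary values you have already set, and there is no local recursion. In the symmetric case $\boldsymbol\gamma=\boldsymbol 0$ your baseline is identically zero and the unspecified ``small even term'' is carrying the entire construction. You flag this yourself as the main obstacle, and indeed it is where the argument is incomplete.

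The paper avoids the problem by not trying to assemble $s$ as an explicit sum of absolute values. It builds $s$ directly as a piecewise-linear function: prescribe its value on each bounding ray of $C$ (forced by the requirement $h=0$ on $\partial C$), insert one extra ray $r_i$ inside each sector $S_i$ and assign a value there satisfying the relevant inequality with slack, then interpolate linearly between consecutive rays. Since every even, continuous, positively homogeneous, piecewise-linear function on $\R^2$ is automatically a sum $\sum\epsilon_i|\langle\boldsymbol a_i,\cdot\rangle|$ (the content of Lemma~\ref{lem:classification}, which you also sketch), no explicit coefficients $c_j$ ever need to be tracked. The remaining work is a finite case analysis (sectors in Case 1/2/3 against boundary rays in Case \textbf{A}/\textbf{B}/\textbf{C}, plus the two rays bounding the half-plane $\pi$) checking that each linear interpolant stays in its window. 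Your three-way classification of sectors and your choice of $\boldsymbol\gamma$ feed directly into this; what is missing is replacing the global ``subtract mountains'' heuristic by this local prescribe-and-interpolate step.
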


In order to prove this proposition, we will require the following lemma. We remark that by `symmetric function' we mean a function $f( \boldsymbol{x})$ such that $f( \boldsymbol{x}) = f( -\boldsymbol{x})$ for all $\boldsymbol{x}$, that is, we mean a \textit{centrally symmetric} function.

\begin{lemma}\label{lem:classification}
    Let $f: \mathbb R^2 \rightarrow \mathbb R$ be a positively homogeneous, continuous, piecewise linear function. Then $h$ is a hinge function if and only if there exists a continuous, piecewise linear, symmetric, positively homogeneous function $s$ and a vector $\boldsymbol{e}$ such that $f(\boldsymbol{x}) = s(\boldsymbol{x}) + \boldsymbol{e} \cdot \boldsymbol{x}$.
\end{lemma}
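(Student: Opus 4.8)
The plan is to prove the two implications separately: the ``only if'' direction is a homogenization argument that works verbatim in any dimension, while the ``if'' direction genuinely uses planarity through a structural description of symmetric positively homogeneous piecewise linear functions on $\mathbb R^2$.

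For the ``only if'' direction, suppose $f$ is a hinge function, so $f = L_0 + \sum_{i=1}^s |L_i| - \sum_{j=s+1}^t |L_j|$ with each $L_i(\boldsymbol{x}) = \boldsymbol{m}_i \cdot \boldsymbol{x} + b_i$ affine. I would exploit positive homogeneity by passing to the limit: for $\lambda > 0$ we have $f(\boldsymbol{x}) = f(\lambda \boldsymbol{x})/\lambda$, and since $|\boldsymbol{m}_i \cdot \lambda \boldsymbol{x} + b_i|/\lambda \to |\boldsymbol{m}_i \cdot \boldsymbol{x}|$ and $b_0/\lambda \to 0$ as $\lambda \to \infty$, this yields $f(\boldsymbol{x}) = \boldsymbol{m}_0 \cdot \boldsymbol{x} + \sum_{i=1}^s |\boldsymbol{m}_i \cdot \boldsymbol{x}| - \sum_{j=s+1}^t |\boldsymbol{m}_j \cdot \boldsymbol{x}|$. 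Setting $s(\boldsymbol{x}) := \sum_{i=1}^s |\boldsymbol{m}_i \cdot \boldsymbol{x}| - \sum_{j=s+1}^t |\boldsymbol{m}_j \cdot \boldsymbol{x}|$ and $\boldsymbol{e} := \boldsymbol{m}_0$, the function $s$ is visibly continuous, piecewise linear, positively homogeneous, and symmetric (each $|\boldsymbol{m}_i \cdot \boldsymbol{x}|$ is), and $f = s + \boldsymbol{e} \cdot \boldsymbol{x}$.

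For the ``if'' direction, given $f = s + \boldsymbol{e} \cdot \boldsymbol{x}$ it suffices to show $s$ can be written as $\sum_k c_k |L_k(\boldsymbol{x})|$ with $L_k$ linear forms and $c_k \in \mathbb R$, since then, absorbing each $c_k$ into $L_k$ (and discarding the $c_k = 0$ terms), $f$ has exactly the form \eqref{def:hingemod} with $L_0 = \boldsymbol{e}\cdot\boldsymbol{x}$. Here I would use that $s$, being positively homogeneous and piecewise linear on $\mathbb R^2$, has each linearity region a cone and hence its non-smoothness locus is a finite union of rays from the origin; by symmetry these rays come in antipodal pairs, i.e. a finite set of lines $\ell_1, \dots, \ell_m$ through the origin. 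For each $\ell_k$ pick a linear form $L_k$ vanishing on $\ell_k$; then $|L_k|$ is non-smooth exactly along $\ell_k$. Near a generic point of $\ell_k$, $s$ equals $\boldsymbol{a}_k\cdot\boldsymbol{x}$ on one side and $\boldsymbol{b}_k\cdot\boldsymbol{x}$ on the other, and continuity on $\ell_k$ forces $\boldsymbol{a}_k - \boldsymbol{b}_k$ to be a scalar multiple of the normal to $\ell_k$ — the same kind of gradient jump produced by $|L_k|$ — so there is a unique $c_k$ for which $s - \sum_k c_k |L_k|$ has no gradient jump across any $\ell_k$. That difference is then a continuous, piecewise linear, positively homogeneous function whose adjacent linear pieces agree to first order across their common boundary line, hence is globally linear; being also symmetric, it vanishes, giving $s = \sum_k c_k |L_k|$.

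The main obstacle is precisely this last claim of the ``if'' direction: that a symmetric positively homogeneous piecewise linear function on $\mathbb R^2$ is a linear combination of absolute values of linear forms. This is where two-dimensionality is essential — the analogue fails in higher dimensions, as witnessed by $\max(x,y,0)$, which is positively homogeneous and piecewise linear but is not a hinge function. The delicate points to nail down are that the gradient jump of $s$ across each line $\ell_k$ is well defined and constant along the whole line (which follows from combining positive homogeneity with symmetry), and that annihilating all these jumps produces a function that is globally linear rather than merely locally linear; for the latter I would argue that a continuous piecewise linear function, all of whose neighbouring linear pieces share the same gradient across their common interface, is affine.
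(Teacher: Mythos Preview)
Your argument is correct and close in spirit to the paper's, but both directions are organised a bit differently. For the ``only if'' direction, the paper argues directly that each mountain/valley of a positively homogeneous hinge function must pass through the origin (otherwise the gradient change along a translated copy of that hyperplane would break homogeneity), so the $L_i$ in the representation may be taken to be linear forms; you instead run a scaling argument, using $f(\boldsymbol{x})=\lim_{\lambda\to\infty}f(\lambda\boldsymbol{x})/\lambda$ to strip the constants from a given representation. Your route is slightly more robust, since it does not implicitly assume the hinge representation is non-redundant (the paper's phrasing would need a word about cancelling terms such as $|L|-|L|$ whose zero sets need not pass through the origin). For the ``if'' direction, both proofs rest on the same idea---cancel the gradient jump of $s$ across each line $\ell_k$ by a suitable multiple of $|L_k|$---but the paper removes the lines one at a time by induction, while you subtract all the $c_k|L_k|$ at once and then argue the remainder is globally linear and, being symmetric, zero. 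These are equivalent packagings of the same mechanism. One small remark: your aside that $\max(x,y,0)$ witnesses the failure in higher dimensions is slightly off target, since that function is not symmetric and hence does not directly contradict the claim about symmetric $s$; the paper's counterexample $\max(x,y,z,-x,-y,-z,0)$ is the right one here.
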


\begin{proof}
    We begin by showing that any such hinge function can be written as $s(\boldsymbol{x}) + \boldsymbol{e}\cdot \boldsymbol{x}$. Indeed, any positively homogeneous hinge function must have all of its mountains and valley pass through the origin; if some mountain/valley did not pass the origin, the change in gradient would contradict positive homogeneity of $f$. Therefore when we write $f$ in the form $f(\boldsymbol{x}) = L_0(\boldsymbol{x}) + \sum_{i=1}^s |L_i(\boldsymbol{x})| - \sum_{j=s+1}^t |L_j(\boldsymbol{x})|$, each $L_i$ is a linear form. But we are now done, as the $L_0(\boldsymbol{x})$ can be written as $\boldsymbol{e} \cdot \boldsymbol{x}$ for some $\boldsymbol{e}$, and the sum of absolute values is a continuous, piecewise linear, symmetric, and positively homogeneous function.

    We now prove the other direction. We take $f$ which can be written as $s(\boldsymbol{x}) + \boldsymbol{e} \cdot \boldsymbol{x}$. Since $\boldsymbol{e} \cdot \boldsymbol{x}$ is itself a hinge function, we need only show that $s(\boldsymbol{x})$ is a hinge function; their sum is then the hinge function needed. Since $s(\boldsymbol{x})$ is symmetric and positively homogeneous, we have that $s(\boldsymbol{0})=0$, and its points of non-differentiability form lines through the origin. These lines will be the valleys/mountains of the hinge function. We now apply induction on the number of such lines of non-differentiability. If no such lines exist, then $s(\boldsymbol{x})$ is just a linear function, which is a hinge function - this is our base case. Now assume that any such function with at most $k-1$ lines of non-differentiability is in fact a hinge function. Suppose that $s$ has $k$ lines of non-differentiability. Pick any one of them; by rotating our coordinate axis, suppose it is the line $y=0$. The change in gradient of the function $s$ when passing this line is of the form $(0,\lambda)$ for some $\lambda \neq 0$. If we consider the function $s(\boldsymbol{x}) - \frac{\lambda}{2}|y|$, we notice that it has one less line of non-differentiability, namely the $y=0$ line is now differentiable. Indeed, suppose WLOG that $s$ has gradient $(a,b)$ just below the $y$-axis, and $(a,b+\lambda)$ just above. Then $s(\boldsymbol{x}) - \frac{\lambda}{2}|y|$ has gradient $(a,b + \lambda/2)$ just above and just below the $y$-axis. By induction, $s(\boldsymbol{x}) - \frac{\lambda}{2}|y|$ is indeed a hinge function, and we recover $s(\boldsymbol{x})$ as a hinge function by adding $ \frac{\lambda}{2}|y|$, which is itself a hinge function. This concludes the proof.
\end{proof}
    We remark that the classification given by \Cref{lem:classification} fails to hold in higher dimensions. For instance, consider the symmetric function on $\mathbb R^3$ given by $f(x,y,z) = \max\{x,y,z,-x,-y,-z,0\}$. If $f$ could be written as a hinge function, then the function $g(x,y):= f(x,y,1+x+y)$ would also be a hinge function. On the other hand, the function $g$ has points of non-differentiability which do not form full lines, and therefore cannot be given by a hinge function. The graph of $g$ is shown in \Cref{fig:nonexamplehinge}.

    \begin{figure}[h]
        \centering
        \includegraphics[width=0.47\linewidth]{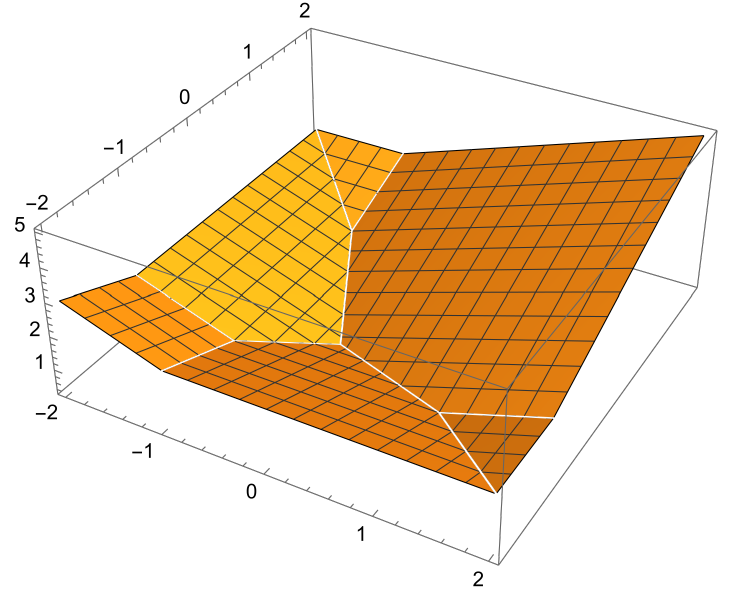}
        \caption{The non-differentiable points of $g$ do not form full lines.}
        \label{fig:nonexamplehinge}
    \end{figure}

\subsection{Constructing values of a symmetric function along rays}

Given Lemma \ref{lem:classification}, we aim to show the existence of a continuous, piecewise linear, positively homogeneous symmetric function $s$ and vector $\boldsymbol{e}$ such that $C$ is the positivity set of $s(\boldsymbol{x}) + \boldsymbol{e} \cdot \boldsymbol{x}$. We begin with the choice of $\boldsymbol{e}$.

\subsubsection{\texorpdfstring{Choice of $\boldsymbol{e}$ and open half plane $\pi$}{}}

If $\langle G \rangle$ is either just the origin itself or a single line, then the assumption that $\langle G \rangle \cap \conv(R)=\emptyset$ implies that $R$ is contained in some open half plane whose boundary passes the origin. Pick any such half plane, and call it $\pi$. We will then choose $\boldsymbol{e}$ to be the unit vector normal to the boundary line of $\pi$, whose direction points into $\pi$. If we have $\langle G \rangle = \mathbb R^2$, then $R$ must be empty, and in this case we will choose $\boldsymbol{e} = \boldsymbol{0}$, and $\pi$ can be chosen as any half plane bounded by a line contained in $G$. These three cases are shown in the table below.
\begin{table}[ht]
    \centering
    \begin{tabular}{|c|c|c|}
    \hline
        $\langle G \rangle = \{\boldsymbol{0}\}$ & $\langle G \rangle$ is a line $l$ &  $\langle G \rangle = \mathbb R^2$, $G$ contains line $l$\\

        $R$ in half-plane $\pi$ & $R$ in half-plane $\pi$ bounded by $l$ & $R = \emptyset$, $\pi$ bounded by $l$\\ 

        $\boldsymbol{e}$ inward normal to $\pi$ & $\boldsymbol{e}$ inward normal to $\pi$ & $\boldsymbol{e} = 0$ \\ \hline
    \end{tabular}
    \caption*{The cases for $R$, $\pi$, and $\boldsymbol{e}$ depending on the dimension of $\langle G \rangle$.}
\end{table}

\subsubsection{\texorpdfstring{Constructing the symmetric function $s(\boldsymbol{x})$}{}}

We must now construct a continuous piecewise linear symmetric function $s( \boldsymbol{x})$, and show that the resulting function $f(\boldsymbol{x}):=s( \boldsymbol{x}) + \boldsymbol{e} \cdot \boldsymbol{x}$ has $C$ as its positivity set. Notice that if $\boldsymbol{x}$ lies on any boundary ray of $C$, we wish to have function value $0$ and so
$$0 = s( \boldsymbol{x}) + \boldsymbol{e} \cdot \boldsymbol{x} \implies s( \boldsymbol{x}) = -\boldsymbol{e} \cdot \boldsymbol{x}$$
therefore the values of our symmetric function $s$ are known on any boundary ray of $C$. Since $s$ is symmetric, the value of $s$ is actually known not only on the rays, but the full lines given by the rays. These lines split the half plane $\pi$ into segments $S_1,\dots,S_t$, where $S_1$ and $S_t$ share are the extremal segments (see Figure \ref{fig:segmentsofpi} for an illustration). For any segment $S_i$, there are three possibilities for points in the interior of $C$, which we name cases $1$, $2$, and $3$.
\begin{description}
    \item[Case 1.] We may have that for all $\boldsymbol{p} \in S_i$, both $\boldsymbol{p}$ and $-\boldsymbol{p}$ lie in the cone $C$. In this case we need to have both $f(\boldsymbol{p})$ and $f(-\boldsymbol{p})$ greater than zero. This gives us the condition that within such a segment we must have $s(\boldsymbol{p}) > |\boldsymbol{e} \cdot \boldsymbol{p}| = \boldsymbol{e} \cdot \boldsymbol{p}$ (recall that within the half plane $\pi$ we always have $\boldsymbol{e} \cdot \boldsymbol{p} \geq 0$, with strict inequality if $\boldsymbol{e} \neq \boldsymbol{0}$).

    \item[Case 2.] Secondly, we may have the opposite case that for all $\boldsymbol{p} \in S_i$, both $\boldsymbol{p}$ and $-\boldsymbol{p}$ are not in $C$. We must then have that both $f(\boldsymbol{p})$ and $f(-\boldsymbol{p})$ are negative or zero, leading to the condition $s(\boldsymbol{p}) \le |\boldsymbol{e} \cdot \boldsymbol{p}| = -\boldsymbol{e} \cdot \boldsymbol{p}$.

    \item[Case 3.] Lastly, we may have that for all $\boldsymbol{p} \in S_i$, $\boldsymbol{p}$ lies in $C$, but $-\boldsymbol{p}$ is \textit{not} in $C$. Note that such points are in the set $R$, so this case does not occur if $\boldsymbol{e} = \boldsymbol{0}$. At these points we must have $f(\boldsymbol{p}) > 0$ and $f(-\boldsymbol{p}) \leq 0$, which leads to the condition that $-\boldsymbol{e} \cdot \boldsymbol{p} < s(\boldsymbol{p}) \leq \boldsymbol{e} \cdot \boldsymbol{p}$.
\end{description}

\begin{table}[ht]
    \centering
    \begin{tabular}{|c|c|c|}
    \hline
        Condition in Case 1 & Condition in Case 2  &  Condition in Case 3\\ \hline
        $s(\boldsymbol{p}) > \boldsymbol{e} \cdot \boldsymbol{p}$ & $s(\boldsymbol{p}) \le -\boldsymbol{e} \cdot \boldsymbol{p}$ & $-\boldsymbol{e} \cdot \boldsymbol{p} < s(\boldsymbol{p}) \leq \boldsymbol{e} \cdot \boldsymbol{p}$\\ \hline
    \end{tabular}
    \caption*{The conditions which must be satisfied by $s$ in each case.}
\end{table}

\begin{figure}[!ht]
    \centering
    \includegraphics[width=0.45\linewidth]{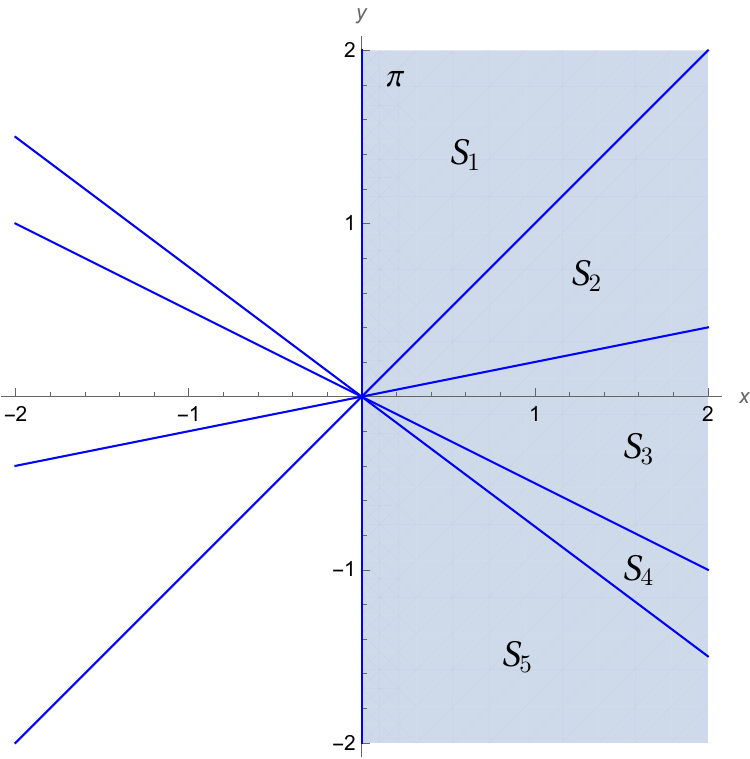}
    \caption{The lines given by boundary rays of $C$ split the half plane $\pi$ into segments.}
    \label{fig:segmentsofpi}
\end{figure}

Note that there is no case where $\boldsymbol{p} \notin C$ and $-\boldsymbol{p} \in C$, as this would mean that $-\boldsymbol{p} \in R$, contradicting the fact that $R$ is contained in the half plane $\pi$. We now consider the boundary rays of these segments which are \textit{not} the boundary rays of $\pi$ (these will be dealt with separately). These `interior' boundary rays can also be split into three cases, which we name \textbf{A}, \textbf{B}, and \textbf{C}.

\begin{description}
    \item[Case A.] We may have a boundary ray $B$ of $S_i$ such that for all $\boldsymbol{p} \in B$, we have $\boldsymbol{p} \in \partial C$, and $-\boldsymbol{p} \notin \partial C$. In this case we have $s(\boldsymbol{p}) = - \boldsymbol{e} \cdot \boldsymbol{p}$. We remark that if this case occurs, the set $R$ is non-empty (and therefore $\boldsymbol{e} \neq \boldsymbol{0}$). This is seen by taking some small neighbourhood of $\boldsymbol{p}$, which must intersect $C$ and gives points from $R$.
    \item[Case B.] We may have a boundary ray $B$ of $S_i$ such that for all $\boldsymbol{p} \in B$, we have $\boldsymbol{p} \in C$, and $-\boldsymbol{p} \in \partial C$. On such boundary rays we have $s(\boldsymbol{p}) =  \boldsymbol{e} \cdot \boldsymbol{p}$.
    \item[Case C.] We may have interior boundary rays such that both $\boldsymbol{p}$ and $-\boldsymbol{p}$ lie on $\partial C$. If $R$ is non-empty, then this case cannot occur on these interior boundary rays. If this case does occur, then $\boldsymbol{e}=\boldsymbol{0}$, and all points $\boldsymbol{p}$ on these rays must have $s(\boldsymbol{p}) = 0$.
\end{description}
\begin{table}[ht]
    \centering
    \begin{tabular}{|c|c|c|}
    \hline
         Definition in Case \textbf{A} & Definition in Case \textbf{B}  &  Definition in Case \textbf{C}\\ \hline
        $s(\boldsymbol{p}) = - \boldsymbol{e} \cdot \boldsymbol{p}$ & $s(\boldsymbol{p}) =  \boldsymbol{e} \cdot \boldsymbol{p}$ & $s(\boldsymbol{p}) = 0$\\ \hline
    \end{tabular}
    \caption*{Definition of $s(\boldsymbol{p})$ for $\boldsymbol{p}$ in an interior boundary ray.}
\end{table}
We remark that similarly to above, the case where $\boldsymbol{p} \notin C$ and $-\boldsymbol{p}\in \partial C$ cannot occur, as taking a point close enough to $-\boldsymbol{p}$ which lies within $C$ would yield a point of $R$ not lying in the half plane $\pi$. 

We now split each segment $S_i$ into two, by placing an extra ray, call it $r_i$, inside each one (see Figure \ref{fig:segmentswithrays}). We will define the value of $s$ along each such ray. Depending on whether $S_i$ lies in case 1, 2, or 3, we define the value of $s$ along $r_i$ differently. 
\begin{enumerate}
    \item If $S_i$ is in case $1$, we pick any vector $\boldsymbol{v}$ which is in a common open half-plane with $r_i$ (so that $\boldsymbol{v} \cdot \boldsymbol{p} >0$ for $\boldsymbol{p} \in r_i$). We then define $s(\boldsymbol{p}) = (\boldsymbol{v}+\boldsymbol{e}) \cdot \boldsymbol{p}$ for $\boldsymbol{p} \in r_i$. This choice satisfies condition $1$ along the ray $r_i$. Note that if $\boldsymbol{e} \neq \boldsymbol{0}$ then $\boldsymbol{v}=\boldsymbol{e}$ will always work.
    \item If $S_i$ is in case $2$, we similarly pick any vector $\boldsymbol{v}$ which is in a common open half-plane with $-r_i$ (so that $\boldsymbol{v} \cdot \boldsymbol{p} <0$ for $\boldsymbol{p} \in r_i$). We then define $s(\boldsymbol{p}) = (\boldsymbol{v} - \boldsymbol{e}) \cdot \boldsymbol{p}$ for $\boldsymbol{p} \in r_i$. This choice satisfies condition $2$ along the ray $r_i$. Note that if $\boldsymbol{e} \neq \boldsymbol{0}$ then $\boldsymbol{v}=-\boldsymbol{e}$ will always work.
    \item If $S_i$ is in case $3$, we define that for $\boldsymbol{p} \in r_i$, we have $s(\boldsymbol{p}) = 0$. This satisfies condition $3$ along $r_i$ (recall that $\boldsymbol{e} \neq \boldsymbol{0}$ if this case occurs.)
\end{enumerate}
\begin{table}[ht]
    \centering
    \begin{tabular}{|c|c|c|}
    \hline
         Definition in Case 1 & Definition in Case 2  &  Definition in Case 3\\ \hline
        $s(\boldsymbol{p}) = (\boldsymbol{v}+\boldsymbol{e}) \cdot \boldsymbol{p}$ & $s(\boldsymbol{p}) = (\boldsymbol{v} - \boldsymbol{e}) \cdot \boldsymbol{p}$  & $s(\boldsymbol{p}) = 0$\\ 
        $\boldsymbol{v}\cdot \boldsymbol{p} >0$ & $\boldsymbol{v}\cdot \boldsymbol{p} <0$ & $\boldsymbol{e} \neq \boldsymbol{0}$\\
        \hline
    \end{tabular}
    \caption*{Definition of $s(\boldsymbol{p})$ for $\boldsymbol{p}$ in a ray $r_i \subseteq S_i$.}
\end{table}
\begin{figure}[ht] 
    \centering
    \includegraphics[width=0.5\linewidth]{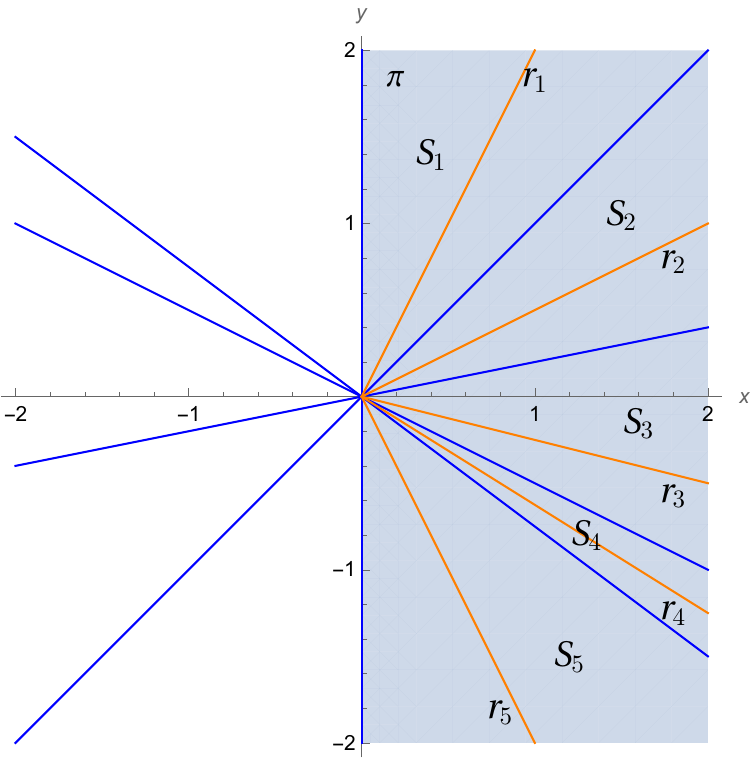}
    \caption{The slices $S_i$ shown with the rays $r_i$}
    \label{fig:segmentswithrays}
\end{figure}

We now interpolate linearly between each $r_i$ with its assigned linear function, and the two boundary rays of $S_i$ with their assigned linear functions coming from cases $\textbf{A}, \textbf{B}$, and $\textbf{C}$. 

\subsection{Case analysis for the interpolation}

Take any $r_i$, and one of the boundary rays $B$ of $S_i$. for simplicity we drop subscripts, setting $S:=S_i$ and $r := r_i$. Depending on the case which $S$ lies in, we need to show that with our definitions and linear interpolation, the corresponding condition needed for that case holds within $S$. We have various combinations of cases to analyse, for $S$ being in case $1,2,3$, and the boundary ray $B$ being in case \textbf{A}, \textbf{B}, or \textbf{C}. Throughout these cases we will write $\boldsymbol{p} = \lambda \boldsymbol{p}_1 + \mu \boldsymbol{p}_2$ with $\lambda, \mu>0$ and $\boldsymbol{p}_1 \in B$ and $\boldsymbol{p}_2 \in S$.

\begin{description}
\item[Case 1\textbf{A} does not occur.] 
    If $S$ is in case $1$, the boundary ray $B$ cannot be in case \textbf{A}. Indeed, if $B$ were in case \textbf{A}, then the ray $-B$ is not in $\partial C \cup C$, but is a boundary ray of $-S_i \subseteq C$, giving a contradiction. 
\item[Case 1\textbf{B}.]Suppose $S$ is in 
    case $1$, and $B$ is in case \textbf{B}. We then have $s(\boldsymbol{p}_1) = \boldsymbol{e} \cdot \boldsymbol{p}_1$ for $\boldsymbol{p}_1 \in B$, and $s(\boldsymbol{p}_2) = (\boldsymbol{v} +\boldsymbol{e} ) \cdot \boldsymbol{p}_2$ for $\boldsymbol{p}_2 \in r$. We then have
    \begin{align*}
        s(\lambda \boldsymbol{p}_1 + \mu \boldsymbol{p}_2) = \lambda (\boldsymbol{e} \cdot \boldsymbol{p}_1) + \mu (\boldsymbol{v} +\boldsymbol{e} ) \cdot \boldsymbol{p}_2 > \lambda (\boldsymbol{e} \cdot \boldsymbol{p}_1) + \mu (\boldsymbol{e} \cdot \boldsymbol{p}_2) = \boldsymbol{e} \cdot \boldsymbol{p}
    \end{align*}
    and so condition $1$ is satisfied by our linear interpolation within $S$.
\item[Case 1\textbf{C}.] Now suppose that 
    $B$ is in case \textbf{C}. Any such occurrence of case \textbf{C} not on the boundary rays of $\pi$ means that $\boldsymbol{e}=\boldsymbol{0}$, and so $s(\boldsymbol{p}_1) = 0$. We then have
    $$s(\lambda \boldsymbol{p}_1 + \mu \boldsymbol{p}_2) = \mu(\boldsymbol{v} +\boldsymbol{e} ) \cdot \boldsymbol{p}_2 > \mu(\boldsymbol{e}  \cdot \boldsymbol{p}_2) =0= \boldsymbol{e} \cdot \boldsymbol{p}.$$
    Therefore condition $1$ is satisfied for all points in $S$.
\item[Case 2\textbf{A}.] If $B$ is in 
    case \textbf{A}, then we have $s(\boldsymbol{p}_1)= -\boldsymbol{e}\cdot \boldsymbol{p}_1$ for $\boldsymbol{p}_1 \in B$. If $S$ is in case $2$, we have $s(\boldsymbol{p}_2) = (\boldsymbol{v} -\boldsymbol{e} ) \cdot \boldsymbol{p}_2$ for $\boldsymbol{p}_2 \in r$. We then have
    $$s(\lambda \boldsymbol{p}_1 + \mu \boldsymbol{p}_2) = \lambda (-\boldsymbol{e} \cdot \boldsymbol{p}_1) + \mu(\boldsymbol{v} -\boldsymbol{e} ) \cdot \boldsymbol{p}_2 < \lambda (-\boldsymbol{e} \cdot \boldsymbol{p}_1) + \mu(-\boldsymbol{e} ) \cdot \boldsymbol{p}_2 = -(\boldsymbol{e} \cdot \boldsymbol{p})$$
    showing that condition $2$ is satisfied within $S$. 
\item[Case 2\textbf{B} does not occur.] 
    This case cannot occur. Indeed, suppose it does, so that all points in $S$ are not in $C$. But since $B$ is inside $C$, there must be boundary of $C$ between $B$ and $S$, contradicting that $B$ is the boundary of $S$.
\item[Case 2\textbf{C}.] As before, any  
    occurrence of case \textbf{C} outside of the boundary rays of $\pi$ implies that $\boldsymbol{e} = \boldsymbol{0}$. If $B$ is in case \textbf{C}, we have that $s(\boldsymbol{p}_1)=0$ for all $\boldsymbol{p}_1 \in B$. Since $S$ is in case $2$, we have that $s(\boldsymbol{p}_2) = \boldsymbol{v} \cdot \boldsymbol{p}_2 < 0$ for $\boldsymbol{p}_2 \in r$. We then have
    $$s(\lambda \boldsymbol{p}_1 + \mu \boldsymbol{p}_2) = \mu (\boldsymbol{v} \cdot \boldsymbol{p})<0 = -\boldsymbol{e} \cdot \boldsymbol{p}$$
    so that condition $2$ is satisfied within $S$.
\item[Case 3\textbf{A}.] In this case, 
    we have that $s(\boldsymbol{p}_1) = -\boldsymbol{e}\cdot \boldsymbol{p}_1$ for $\boldsymbol{p}_1 \in B$. Since $S$ is in case $3$, we have that $s(\boldsymbol{p}_2) =0$. We then have
    $$s(\lambda \boldsymbol{p}_1 + \mu \boldsymbol{p}_2) = -\lambda (\boldsymbol{e}\cdot \boldsymbol{p}_1)>-\lambda (\boldsymbol{e}\cdot \boldsymbol{p}_1)  -\mu (\boldsymbol{e}\cdot \boldsymbol{p}_2) = - \boldsymbol{e}\cdot \boldsymbol{p}.$$
    Therefore the lower bound $s(\boldsymbol{p}) > -\boldsymbol{e}\cdot \boldsymbol{p}$ holds. Furthermore, we have
    $$s(\lambda \boldsymbol{p}_1 + \mu \boldsymbol{p}_2) = -\lambda (\boldsymbol{e}\cdot \boldsymbol{p}_1)<0<\boldsymbol{e}\cdot \boldsymbol{p}.$$
    Therefore the upper bound $s(\boldsymbol{p}) \leq \boldsymbol{e}\cdot \boldsymbol{p}$ holds, which together with the lower bound shows that condition $3$ holds within $S$.
\item[Case 3\textbf{B}.] In this case we 
    have that $s(\boldsymbol{p}_1) = \boldsymbol{e}\cdot \boldsymbol{p}_1$ for $\boldsymbol{p}_1 \in B$. We still have $s(\boldsymbol{p}_2) = 0$ for $\boldsymbol{p}_2 \in r$. We then have
    $$s(\lambda \boldsymbol{p}_1 + \mu \boldsymbol{p}_2) = \lambda (\boldsymbol{e}\cdot \boldsymbol{p}_1) > 0 > -\boldsymbol{e}\cdot \boldsymbol{p}$$
    giving the lower bound in condition $3$. We also have
    $$s(\lambda \boldsymbol{p}_1 + \mu \boldsymbol{p}_2) = \lambda (\boldsymbol{e}\cdot \boldsymbol{p}_1) < \lambda (\boldsymbol{e}\cdot \boldsymbol{p}_1) + \mu (\boldsymbol{e}\cdot \boldsymbol{p}_2) = \boldsymbol{e}\cdot \boldsymbol{p}.$$
    Therefore the upper bound of condition $3$ is also satisfied.
\item[Case 3\textbf{C} does not occur.] We claim that this case does not occur. Indeed, recall that if case $3$ occurs for $S$, we must have $S \subseteq R$. However if the boundary $B$ is in case \textbf{C} there are (at least) two full lines contained in $G$, so that $\langle G \rangle  = \mathbb R^2$, and so we must have $R = \emptyset$, giving a contradiction.
\end{description}
We are now done with the interpolation of $S$ between the rays $r_i$ and the boundary rays $B$ of the segments $S_i$ which are \textit{not} the boundary rays of $\pi$. Let us now deal with these two boundary ray interpolations.

\subsection{\texorpdfstring{Interpolating the boundary rays of $\pi$}{}}

The difference with interpolating the boundary rays of $\pi$ is that these two rays must take the same linear function values, so that after extending $s$ symmetrically into the other half plane, we obtain a continuous symmetric function. Let us denote by $r_{\pi}$ the boundary ray of $\pi$ which is on the boundary of $S_1$. 
\subsubsection{\texorpdfstring{Defining $s$ along the boundary of $\pi$ when $\dim(\langle G \rangle) \geq 1$}{}}
If we have $\dim(\langle G \rangle) \geq 1$, our choice of $\pi$ always has the boundary line of $\pi$ lying in $G$, and therefore fully contained within the boundary of $C$. We then need to have $s(\boldsymbol{p}) = - \boldsymbol{e} \cdot \boldsymbol{p}$, for all $\boldsymbol{p}$ on this boundary. We claim that this implies $s(\boldsymbol{p}) = 0$ on the boundary; indeed, if $\langle G \rangle$ is a single line $l$ then our choice of $\boldsymbol{e}$ is normal to $l$. $l$ is also the boundary line of $\pi$, so we have $\boldsymbol{e} \cdot \boldsymbol{p} = 0$ for $\boldsymbol{p} \in l$, as needed. If on the other hand we have $\langle G \rangle = \mathbb R^2$, then we must have $\boldsymbol{e} = \boldsymbol{0}$ and again the result follows. We therefore define $s(\boldsymbol{p}) = 0$ for all $\boldsymbol{p}$ on the boundary line of $\pi$. The symmetry of $s$ along the boundary of $\pi$ is clearly satisfied in this case. We must now interpolate between this boundary line and the two rays $r_1$ and $r_t$. It is enough to only check the interpolation between $r_{\pi}$ and $r_1$. Below we use $\boldsymbol{p}_1$ to denote points in $r_1$, and $\boldsymbol{p}_2$ to denote points in $r_{\pi}$. We always have $\lambda, \mu >0$ positive real numbers.

\begin{description}
    \item[$S_1$ is in case 1.] If $S_1$ is in case $1$, then we have $s(\boldsymbol{p}_1) = (\boldsymbol{v} + \boldsymbol{e}) \cdot \boldsymbol{p}_1$ for $\boldsymbol{p}_1 \in r_1$. We then get the following,
    $$s(\lambda \boldsymbol{p}_1 + \mu \boldsymbol{p}_2) = \lambda (\boldsymbol{v} + \boldsymbol{e}) \cdot \boldsymbol{p}_1 > \lambda(\boldsymbol{e} \cdot \boldsymbol{p}_1) + \underbrace{\mu(\boldsymbol{e} \cdot \boldsymbol{p}_2)}_{=0} = \boldsymbol{e} \cdot \boldsymbol{p}$$
    as needed.
    \item[$S_1$ is in case 2.] If $S_1$ is in case $2$, then we have $s(\boldsymbol{p}_1) = (\boldsymbol{v} - \boldsymbol{e}) \cdot \boldsymbol{p}_1$ for $\boldsymbol{p}_1 \in r_1$. We then have
    $$s(\lambda \boldsymbol{p}_1 + \mu \boldsymbol{p}_2) = \lambda (\boldsymbol{v} - \boldsymbol{e}) \cdot \boldsymbol{p}_1 < -\lambda(\boldsymbol{e} \cdot \boldsymbol{p}_1) - \underbrace{\mu(\boldsymbol{e} \cdot \boldsymbol{p}_2)}_{=0} = -\boldsymbol{e} \cdot \boldsymbol{p}$$ as needed.
    \item[$S_1$ is in case 3.]If $S_1$ is in case $3$, then we have $s(\boldsymbol{p}_1) = 0$ for $\boldsymbol{p}_1 \in r_1$, and any occurrence of case $3$ implies that $\boldsymbol{e} \neq \boldsymbol{0}$. Since $s$ takes value $0$ along both rays, $s$ is interpolated as $0$ everywhere in between the rays; we then clearly have $-\boldsymbol{e} \cdot \boldsymbol{p} < 0 = s(\boldsymbol{p}) \leq \boldsymbol{e} \cdot \boldsymbol{p}$, as needed for case $3$.
\end{description}
In all cases our interpolation satisfies the conditions needed, and is symmetric along the boundary of $\pi$.

\subsubsection{\texorpdfstring{Defining $s$ along the boundary of $\pi$ when $\dim(\langle G \rangle) =0$}{}}

We now consider the case where $G$ is empty. If this is the case, we claim that $\pi$ can always be chosen so that neither ray of $\pi$ lies in the boundary of $C$. 

\begin{lemma}\label{lem:boundaryGempty}
    Suppose that $\langle G \rangle  = \{\boldsymbol{0}\}$. Then there is a choice of $\pi$ such that neither boundary ray of $\pi$ lies within $\partial C$.
\end{lemma}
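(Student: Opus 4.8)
The plan is to refine the observation, made earlier, that when $\langle G \rangle = \{\boldsymbol{0}\}$ the set $R$ lies in some open half-plane bounded by a line through the origin. Passing to the unit circle, write $A \subseteq S^1$ for the set of directions of points of $C$ (a finite union of open arcs, since $C$ is polytopal) and $D_R = \{\, d \in S^1 : d \in A,\ -d \notin A \,\}$ for the set of directions of points of $R$; thus $D_R$ is contained in some open half-circle. The key step is to upgrade this to the statement that the \emph{closure} $\overline{D_R}$ is also contained in an open half-circle, equivalently that the shortest closed arc $[\alpha,\beta]$ containing $\overline{D_R}$ has length $\beta - \alpha < \pi$.

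To see this, suppose instead that $\beta - \alpha = \pi$. Since $D_R$ already lies in an open half-circle and $[\alpha,\alpha+\pi]$ has length $\pi$, that half-circle must be exactly $(\alpha,\alpha+\pi)$; so $D_R \subseteq (\alpha,\alpha+\pi)$ while both antipodal endpoints $\alpha,\alpha+\pi$ lie in $\overline{D_R}$. In particular there are directions of $R$ converging to $\alpha$ from above and directions of $R$ converging to $\alpha+\pi$ from below. Because $D_R \subseteq A$, the open set $A$ contains directions converging to $\alpha$ from above, so $\alpha$ is not in the interior of $S^1 \setminus A$. On the other hand, for a direction $d' \in D_R$ close to $\alpha+\pi$ from below we have $-d' \notin A$, and $-d'$ lies just below $\alpha$; hence $S^1 \setminus A$ contains directions converging to $\alpha$ from below, so $\alpha \notin A$. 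Therefore $\alpha$ lies on the boundary of $A$, i.e. the ray in direction $\alpha$ is contained in $\partial C$. By the symmetric argument (exchanging the roles of $\alpha$ and $\alpha+\pi$) the ray in direction $\alpha+\pi$ is also contained in $\partial C$. But these two rays are antipodal, so $\partial C$ contains a full line, hence $G$ contains nonzero points and $\langle G \rangle \neq \{\boldsymbol{0}\}$ — a contradiction.

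Given $\beta - \alpha < \pi$, for each $\theta$ let $\ell_\theta$ be the line through the origin in direction $\theta$. The set of $\theta$ for which one of the open half-planes bounded by $\ell_\theta$ contains the arc $[\alpha,\beta]$ (and hence contains $R$) is a non-empty open interval, of length $\pi - (\beta-\alpha)$. Since $C$ is polytopal, $\partial C$ is a finite union of rays through the origin, so there are only finitely many $\theta$ for which a ray of $\ell_\theta$ is contained in $\partial C$. Removing these finitely many values from the open interval still leaves a non-empty set; choosing any $\theta$ in it and letting $\pi$ be the open half-plane bounded by $\ell_\theta$ that contains $R$ produces a half-plane with the required property: it contains $R$ and neither of its two boundary rays lies in $\partial C$.

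The one genuinely delicate point is the second paragraph: extracting, from the purely combinatorial description of $D_R$ in terms of which directions do and do not occur among the points of $C$, the conclusion that $C$ has a one-sided boundary at each of the two antipodal directions $\alpha$ and $\alpha+\pi$. Once that is in place the contradiction with $\langle G \rangle = \{\boldsymbol{0}\}$ is immediate, and the concluding counting argument is routine.
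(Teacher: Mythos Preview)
Your argument is correct and follows essentially the same route as the paper: both show that if $R$ fits in a \emph{unique} open half-plane then each boundary ray of that half-plane must lie in $\partial C$, contradicting $\langle G\rangle=\{\boldsymbol{0}\}$, and then use the finiteness of $\partial C$'s rays to pick a good $\pi$ from the resulting continuum of choices. The only difference is packaging: the paper leans on the already-defined segments $S_1,\dots,S_n$ to reach the contradiction in two lines, whereas you give a self-contained version on $S^1$ via the minimal closed arc containing $\overline{D_R}$.
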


\begin{proof}
    Recall that if $\langle G \rangle  = \{\boldsymbol{0}\}$, then $\pi$ was simply chosen such that $R$ lies within $\pi$. If there are infinitely many such choices of $\pi$, then we are done, since some choice will avoid all boundaries of $C$. We will show that this must be the case; that is, $R$ cannot be contained in precisely one open half plane.

    For contradiction, suppose that $R$ is indeed contained in exactly one half plane $\pi$. This implies that the segments $S_1$ and $S_n$ are within $R$. This implies that the segments $-S_1$ and $-S_n$ lie outside of $C$, but then both boundary rays of $\pi$ must be in $\partial C$, contradicting that $\dim(\langle G \rangle) = 0$.
\end{proof}

Given that neither boundary ray of $\pi$ is a boundary of $C$, we can finish our interpolation argument. The relevant segments are $S_1$ and $S_n$, as these segments have $r_{\pi}$, resp. $-r_{\pi}$ on their boundary. We claim that since neither of $r_{\pi}$ and $-r_{\pi}$ are in $\partial C$, the segments $S_1$ and $S_n$ must be in the same case. Indeed, suppose that $S_1$ and $S_n$ are not in the same case. Without loss of generality, there are three options; 
\begin{itemize}
    \item If $S_1$ is in case 1 and $S_n$ is in case $2$, then $S_1 \subseteq C$ and $-S_n$ lies outside of $C$. However, $S_1$ and $-S_n$ have $r_{\pi}$ as their common border, so $r_{\pi}$ must be on the boundary of $C$, giving a contradiction.
    \item Similarly, if $S_1$ is in case $1$ and $S_n$ is in case $3$, we again find that $S_1 \subseteq C$ and $-S_n$ lies outside of $C$, yielding the same contradiction as above.
    \item If $S_1$ is in case $2$ and $S_n$ is in case $3$, then $S_n \subseteq C$ and $-S_1$ lies outside of $C$, meaning that $-r_{\pi}$ must be on the boundary of $C$, giving a contradiction.
\end{itemize}
Note that these are all the cases we have to check, up to renaming $r_{\pi}$ as $-r_{\pi}$. Therefore, we must have both $S_1$ and $S_n$ are in case $1$, or both in case $2$ (they cannot both be in case $3$, as this would mean the boundary of $\pi$ is in $\partial C$). In both cases, we define $s$ on $r_{\pi}$ (and symmetrically on $-r_{\pi}$) by following the definition given on $r_1$. For instance, if $S_1$ is in case $1$, then for $p \in r_{\pi}$ we define $s(\boldsymbol{p}) = (\boldsymbol{v} + \boldsymbol{e}) \cdot \boldsymbol{p}$, for some vector $\boldsymbol{v}$ with $\boldsymbol{v} \cdot \boldsymbol{p} >0$ for all $\boldsymbol{p} \in r_{\pi}$. In each of these cases it is easy to see that the resulting interpolation between $r_{\pi}$ and $r_1$ (and also therefore between $-r_{\pi}$ and $r_n$) satisfy the corresponding conditions within $S_1$ and $S_n$.

Since we have symmetrically defined $s$ along the boundary of $\pi$, the symmetric extension of $s$ to the whole plane is continuous. We have therefore constructed a hinge function $f(\boldsymbol{x})$ given by $s(\boldsymbol{x}) + \boldsymbol{e}\cdot \boldsymbol{x}$.

\subsection{Verifying the positivity set of the constructed hinge function}

We will now finish the proof by showing that $s(\boldsymbol{x}) + \boldsymbol{e} \cdot \boldsymbol{x}$ has $C$ as its positivity set. We begin by taking a point $\boldsymbol{p} \in R \subseteq C$. Since $\boldsymbol{p} \in R$, $\boldsymbol{e} \cdot \boldsymbol{p} >0$, and $\boldsymbol{p}$ lies in some segment $S$ which is in case $3$. Therefore $s(\boldsymbol{p})$ satisfies $-\boldsymbol{e} \cdot \boldsymbol{p}<s(\boldsymbol{p})\leq \boldsymbol{e}\cdot \boldsymbol{p}$. We then have $s(\boldsymbol{p}) + \boldsymbol{e} \cdot \boldsymbol{p} >0$, as needed.

Secondly, let us take $\boldsymbol{p} \in C \setminus R$. Since $\boldsymbol{p} \notin R$, we must have $-\boldsymbol{p} \in C$. Precisely one of $\boldsymbol{p}$ or $-\boldsymbol{p}$ lies in a segment $S$ which is in case $1$; if $\boldsymbol{p} \in S \subseteq \pi$ then we have $s(\boldsymbol{p}) = (\boldsymbol{e}+\boldsymbol{v})\cdot \boldsymbol{p} + \boldsymbol{e} \cdot \boldsymbol{p} >0$. On the other hand, if $-\boldsymbol{p} \in S$, then we have $s(\boldsymbol{p}) = s(-\boldsymbol{p}) > \boldsymbol{e}\cdot \boldsymbol{p}$, and so we have
$s(\boldsymbol{p}) + \boldsymbol{e} \cdot \boldsymbol{p} >0$. Therefore in both cases $\boldsymbol{p}$ is in the positivity set of $s(\boldsymbol{p}) + \boldsymbol{e} \cdot \boldsymbol{p}$. 

Let us now take $\boldsymbol{p} \notin C$, but such that $-\boldsymbol{p} \in C$. In this case $-\boldsymbol{p} \in R$, and so since $\boldsymbol{p}$ must lie in a segment which is in case 3, we have $-\boldsymbol{e}\cdot (-\boldsymbol{p})<s(-\boldsymbol{p})=s(\boldsymbol{p}) \leq \boldsymbol{e}\cdot (-\boldsymbol{p})$. We then have
$s(\boldsymbol{p}) + \boldsymbol{e}\cdot \boldsymbol{p} \leq \boldsymbol{e}\cdot (-\boldsymbol{p}) +\boldsymbol{e}\cdot \boldsymbol{p} =0$, so that $\boldsymbol{p}$ does not lie in the positivity set.

Our last non-boundary case is when both $\boldsymbol{p}$ and $-\boldsymbol{p}$ are not in $C$. In this case at least one of $\boldsymbol{p}$ or $-\boldsymbol{p}$ lies in a segment $S$ which is in case $2$, so that we have $s(\boldsymbol{p})=s(-\boldsymbol{p}) < -\boldsymbol{e} \cdot \boldsymbol{p}$. We then have $s(\boldsymbol{p}) + \boldsymbol{e} \cdot \boldsymbol{p} < -\boldsymbol{e} \cdot \boldsymbol{p} + \boldsymbol{e} \cdot \boldsymbol{p} = 0$, as required.

We now wish to take points from the boundary of $C$; they must all lie outside of the positivity set of $s(\boldsymbol{x}) + \boldsymbol{e} \cdot \boldsymbol{x}$. We first assume that $\langle G \rangle = \mathbb R^2$ - in this case $\boldsymbol{e}=\boldsymbol{0}$. For a point $\boldsymbol{p} \in \partial C$ with $-\boldsymbol{p} \in \partial C$, at least one of $\boldsymbol{p}$ or $-\boldsymbol{p}$ lies on a boundary ray within $\pi$ (including the boundary rays of $\pi$ themselves), and along these rays we have $s(\boldsymbol{p}) = s(-\boldsymbol{p}) = 0$, as required.

Now assume that $\langle G \rangle$ consists of a single line, which is always chosen as the boundary line of $\pi$. If we have a point $\boldsymbol{p}$ on this boundary line of $\pi$, then by our definitions we have $s(\boldsymbol{p}) =0$, as needed. If we pick a point $\boldsymbol{p} \in \partial C$ which is not on the boundary line, then we have two cases; firstly, assume that $\boldsymbol{p} \in \partial C \subseteq \pi$. In this case $\boldsymbol{p}$ lies on a boundary ray which is in case \textbf{A}, so that we have $s(\boldsymbol{p}) = -\boldsymbol{e} \cdot \boldsymbol{p}$. Then we have $s(\boldsymbol{p}) +\boldsymbol{e} \cdot \boldsymbol{p} =0$, as needed. Secondly, suppose that $\boldsymbol{p} \notin \pi$, so that $-\boldsymbol{p}$ lies on a ray which is in case \textbf{B}. We then have $s(\boldsymbol{p}) = s(-\boldsymbol{p}) = \boldsymbol{e} \cdot (-\boldsymbol{p})$, and so $s(\boldsymbol{p}) + \boldsymbol{e} \cdot \boldsymbol{p} = 0$.

Finally, we deal with the case when $\langle G \rangle = \{ 0 \}$. In this case by Lemma \ref{lem:boundaryGempty} all points $\boldsymbol{p} \in \partial C$ either themselves lie in a ray in case \textbf{A}, or otherwise $-\boldsymbol{p}$ lies in a ray which is in case \textbf{B}. By the same proofs as above, we have $s(\boldsymbol{p}) = 0$ in both of these cases.

We conclude that the positivity set of $s(\boldsymbol{x}) + \boldsymbol{e} \cdot \boldsymbol{x}$ is precisely $C$, concluding the proof of Theorem \ref{thm:main}.

\section*{Acknowledgments}
A.W. was partially supported by Austrian Science Fund project PAT2559123. We would like to thank Jose Capco for useful discussions.

\vspace{4mm}
\noindent Research Institute for Symbolic Computation (RISC)
\\Johannes Kepler University, Linz, Austria
\\\url{josef.schicho@risc.jku.at}
\\[4mm]
Johann Radon Institute for Computational and Applied Mathematics (RICAM)
\\Austrian Academy of Sciences, Linz, Austria
\\\url{ayushkumar.tewari@ricam.oeaw.ac.at}
\\\url{audie.warren@oeaw.ac.at}

\bibliography{hinges}
\bibliographystyle{siam}

\end{document}